\documentclass[sigconf, nonacm]{acmart}
\usepackage{multirow} 
\AtBeginDocument{%
  }

\setcopyright{acmlicensed}
\copyrightyear{2018}
\acmYear{2018}
\acmDOI{XXXXXXX.XXXXXXX}
\acmConference[Conference acronym 'XX]{Make sure to enter the correct
  conference title from your rights confirmation email}{June 03--05,
  2018}{Woodstock, NY}
\acmISBN{978-1-4503-XXXX-X/2018/06}

\begin{document}

\title{EmergentBridge: Improving Zero-Shot Cross-Modal Transfer in Unified Multimodal Embedding Models}

\author{Jincheng Xie}
\affiliation{%
  \institution{Tsinghua University}
  \city{Beijing}
  \country{China}
}
\email{xiejc22@mails.tsinghua.edu.cn}
\authornote{Both authors contributed equally to this research.}
\author{Xingchen Xiao}
\authornotemark[1]
\affiliation{%
  \institution{School of Computer Science and Technology, Beijing Institute of Technology}
  \city{Beijing}
  \country{China}
}
\email{xcxiao@bit.edu.cn}

\author{Runheng Liu}
\affiliation{%
  \institution{School of Computer Science and Technology, Beijing Institute of Technology}
  \city{Beijing}
  \country{China}
}
\email{rhliu@bit.edu.cn}

\author{Zhongyi Huang}
\affiliation{%
  \institution{Tsinghua University}
  \city{Beijing}
  \country{China}
}
\email{zhongyih@tsinghua.edu}

\author{Yu Zheng}
\affiliation{%
  \institution{JD iCity, JD Technology, JD Intelligent Cities Research}
  \city{Beijing}
  \country{China}}
\email{msyuzheng@outlook.com}

\author{Heyan Huang}
\affiliation{%
 \institution{School of Computer Science and Technology, Beijing Institute of Technology}
 \city{Beijing}
 \country{China}}
\email{hhy63@bit.edu.cn}
\authornote{Corresponding author}

\renewcommand{\shortauthors}{Trovato et al.}

\begin{abstract}
Unified multimodal embedding spaces underpin practical applications such as cross-modal retrieval and zero-shot recognition.
In many real deployments, however, supervision is available only for a small subset of modality pairs (e.g., image--text), leaving \emph{unpaired} modality pairs (e.g., audio$\leftrightarrow$depth, infrared$\leftrightarrow$audio) weakly connected and thus performing poorly on zero-shot transfer. 
Addressing this sparse-pairing regime is therefore essential for scaling unified embedding systems to new tasks without curating exhaustive pairwise data.
We propose \textbf{EmergentBridge}, an embedding-level bridging framework that improves performance on these unpaired pairs \emph{without requiring exhaustive pairwise supervision}.
Our key observation is that naively aligning a new modality to a synthesized proxy embedding can introduce \emph{gradient interference}, degrading the anchor-alignment structure that existing retrieval/classification relies on.
EmergentBridge addresses this by (i) learning a mapping that produces a \emph{noisy bridge anchor} (a proxy embedding of an already-aligned modality) from an anchor embedding, and (ii) enforcing proxy alignment only in the subspace orthogonal to the anchor-alignment direction, preserving anchor alignment while strengthening non-anchor connectivity.
Across nine datasets spanning multiple modalities, EmergentBridge consistently outperforms prior binding baselines on zero-shot classification and retrieval, demonstrating strong emergent alignment.
\end{abstract}

\begin{CCSXML}
<ccs2012>
   <concept>
       <concept_id>10002951.10003317.10003371.10003386.10003387</concept_id>
       <concept_desc>Information systems~Image search</concept_desc>
       <concept_significance>300</concept_significance>
       </concept>
   <concept>
       <concept_id>10002951.10003317.10003371.10003386.10003389</concept_id>
       <concept_desc>Information systems~Speech / audio search</concept_desc>
       <concept_significance>300</concept_significance>
       </concept>
 </ccs2012>
\end{CCSXML}

\ccsdesc[300]{Information systems~Image search}
\ccsdesc[300]{Information systems~Speech / audio search}

\keywords{Multimodal Embedding, Unified Representation Space, Cross-modal Retrieval}

\received{20 February 2007}
\received[revised]{12 March 2009}
\received[accepted]{5 June 2009}

\maketitle

\begin{figure}[htbp] 
    \centering
    \includegraphics[width=0.5\textwidth]{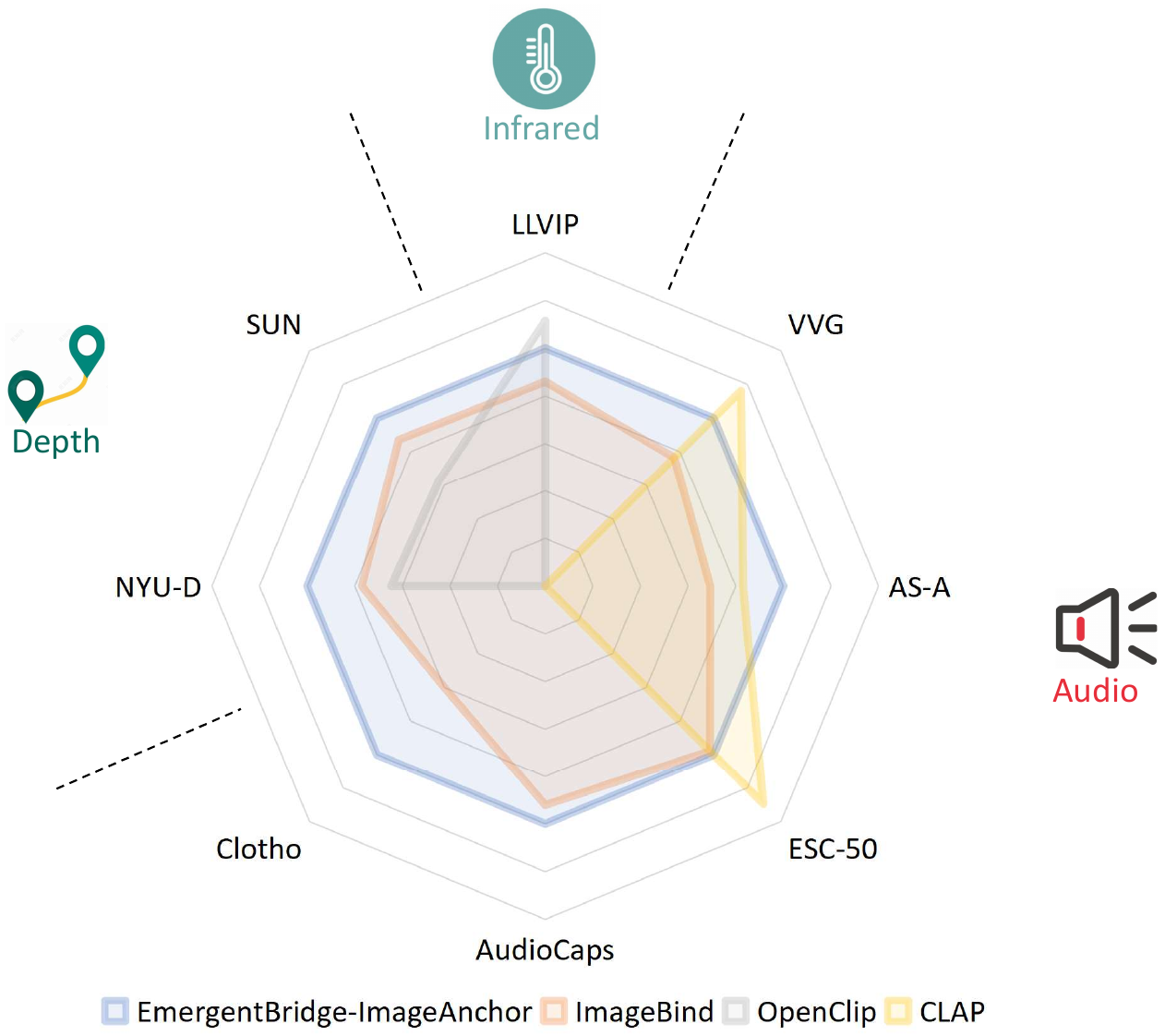} 
    \vspace{-10mm}
    \caption{\textbf{Zero-Shot Language-Related Task Performance:} EmergentBrideg with image anchor modality is demonstrated as a powerful emergent ability for indirect alignment modalities.}  
    \label{fig_f}
    \vspace{-3mm}
\end{figure}
\section{Introduction}

In real-world multimodal systems, models are increasingly expected to reason across heterogeneous sensory inputs—such as vision, depth, infrared, and audio—despite the fact that many modality pairs are never directly observed together during training.
To address this challenge, recent omni-capable multimodal foundation models adopt a unified embedding space as a common interface, where modality-specific encoders map diverse inputs into a shared representation space \citep{Wang2023ONEPEACEEO,girdhar2023imagebindembeddingspacebind,guzhov2021audioclipextendingclipimage,Wu2021Wav2CLIPLR,Wu2022LargeScaleCL,OpenShape}.

\begin{figure*}
\vspace{-3mm}
    \centering
    \includegraphics[width=0.95\linewidth]{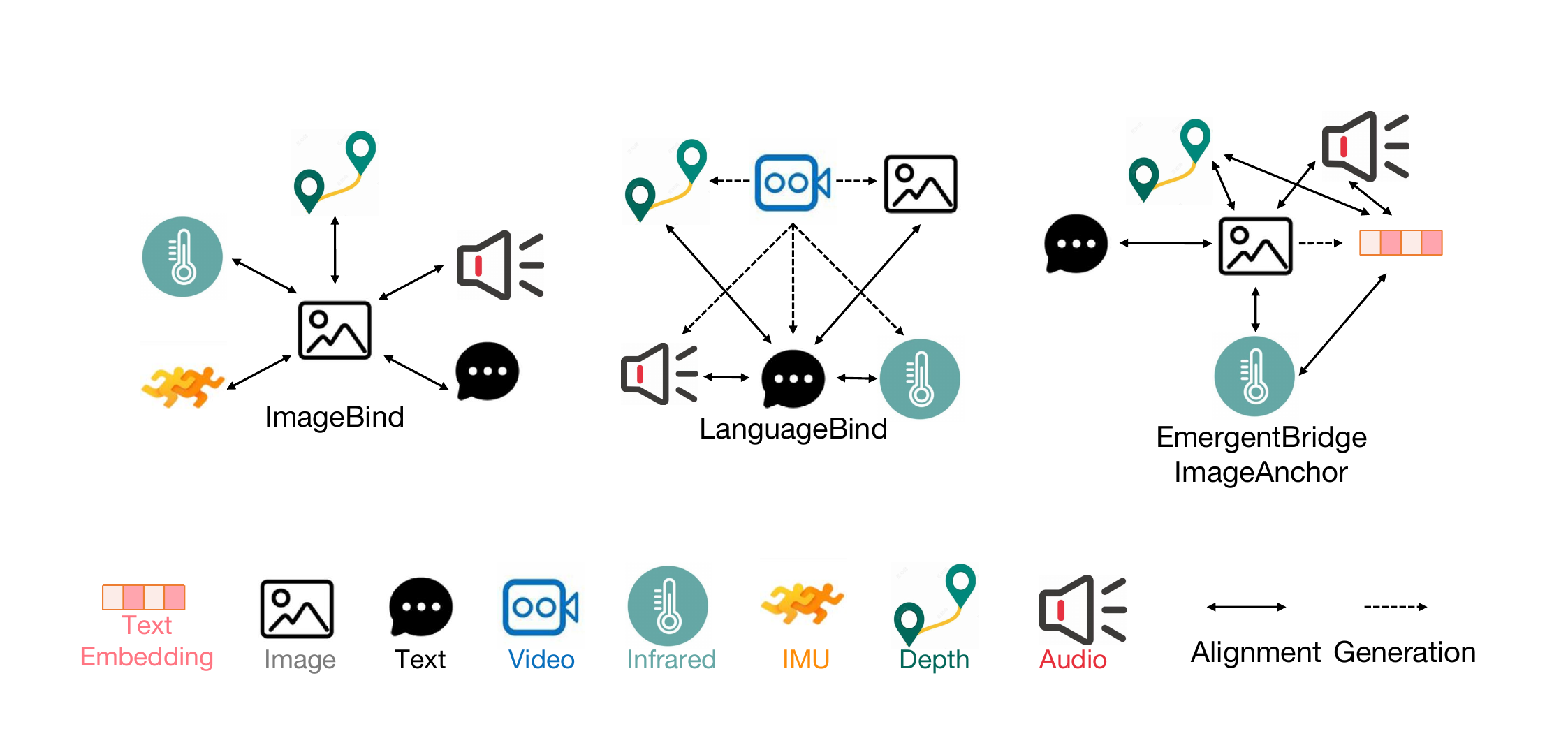}
    \vspace{-0.5cm}
    \caption{\textbf{EmergentBridge vs. ImageBind and LanguageBind:}
The left image illustrates ImageBind's approach, where modalities are indirectly aligned through direct alignment with the image modality. The center image depicts LanguageBind's strategy of augmenting the model's zero-shot capabilities by generating data. In contrast, the right image demonstrates EmergentBridge, which uses the image as the core modality to enhance emergent zero-shot capabilities by generating embeddings rather than additional data.
    }
    \label{fig:compare}
\vspace{-5mm}
\end{figure*}

However, a unified embedding space alone is insufficient: because supervision typically covers only a small subset of modality pairs, modalities that are never directly paired during training can remain weakly and inconsistently aligned, leading to brittle or even failed zero-shot transfer at inference time  \citep{zhu2024languagebindextendingvideolanguagepretraining}.
A key reason is that current multimodal embedding systems are typically optimized around a small set of \emph{anchor-supervised} modality pairs (e.g., image--text).
While individual modalities may each align well with the anchor, this does not ensure reliable alignment between \emph{unpaired} modality pairs (e.g., audio$\leftrightarrow$depth). 
As a result, a model can perform strongly on anchor-supervised pairs yet still perform poorly in heterogeneous, sparsely supervised settings.

This behavior indicates a missing capability in current multimodal embedding spaces: the ability to meaningfully relate two modalities that were never seen together during training.
We refer to this capability as \emph{emergent alignment}, which is essential for unified representations to generalize beyond anchor-supervised settings.
Without emergent alignment, strong alignment to a shared anchor does not translate into reliable cross-modal reasoning between unseen modality pairs.
We make emergent alignment observable by measuring zero-shot transfer on unpaired modality pairs via cross-modal retrieval and zero-shot classification.

To avoid exhaustive pairwise supervision, most existing multimodal systems adopt anchor-based binding, where each modality is independently aligned to a single anchor modality \citep{Gao2024MEDBindUL,FreeBind,zhu2024languagebindextendingvideolanguagepretraining,geobind,omnibind,unibind}.
While effective for scalable pretraining and anchor-supervised tasks, this paradigm leaves relationships between non-anchor modalities implicitly defined and weakly constrained.

As a result, transfer between unpaired modality pairs remains fragile, for example in ImageBind \citep{girdhar2023imagebindembeddingspacebind}, where strong alignment to images does not guarantee reliable cross-modal reasoning between non-image modalities.
More broadly, \textbf{existing binding approaches lack a mechanism to explicitly strengthen emergent alignment without degrading well-supervised anchor alignments, revealing a fundamental tension between scalability and cross-modality consistency.}

To resolve this tension, we introduce \textbf{EmergentBridge}, a lightweight framework that improves cross-modal connectivity without disrupting existing anchor-based alignments.
EmergentBridge can be applied on top of pretrained multimodal embedding models, avoiding costly re-pretraining or the need for full pairwise supervision.

EmergentBridge adds an intermediate embedding-level bridge between modalities.
Given an anchor embedding, the framework synthesizes a proxy embedding for an already-aligned modality, which serves as an additional alignment target for a new modality.
By jointly aligning to both the anchor and the synthesized proxy, the model strengthens relationships between unpaired modalities while maintaining the scalability advantages of anchor-based training.

Across nine datasets spanning audio, depth, and infrared modalities, \textbf{EmergentBridge} achieves substantial improvements in zero-shot transfer on unpaired modality pairs, while preserving strong performance on anchor-supervised tasks. These results demonstrate that emergent alignment can be explicitly strengthened without sacrificing the core utility of unified embedding spaces.

Our primary contributions are summarized as follows:
\begin{itemize}
\item We identify \emph{emergent alignment} as a critical and underexplored capability of unified multimodal embedding spaces, and characterize it through zero-shot transfer on \emph{unpaired} modality pairs under limited pairwise supervision.
\item We propose \textbf{EmergentBridge}, a lightweight embedding-level framework that explicitly strengthens connectivity between non-anchor modalities by synthesizing proxy embeddings from anchor-supervised pairs, without retraining large backbones or requiring exhaustive pairwise data.
\item To mitigate interference from noisy proxy signals, we introduce an \emph{orthogonal-subspace alignment} regularizer, and provide theoretical conditions under which anchor-modality alignment is preserved.
\item Extensive experiments across nine datasets spanning audio, depth, and infrared demonstrate substantial improvements in zero-shot transfer on unpaired modality pairs—averaging 24.7\% in classification and 49\% in retrieval—while largely preserving performance on well-supervised anchor pairs.
\end{itemize}
\section{Related Work}

\paragraph{Multimodal Learning}
CLIP \citep{radford2021learningtransferablevisualmodels} is a seminal multimodal learning method that aligns images and text to build cross-modal representations. Numerous approaches, including CLIP4Clip \citep{clip4clip} and Clip2Video \citep{CLIP2Video}, extend CLIP to extract semantic visual representations. 
Recent efforts have extensively explored multimodal alignment through pretraining \citep{yin2023survey,xu2023multimodal,wu2023multimodal}. Beyond vision and language modalities, AudioCLIP \citep{guzhov2021audioclipextendingclipimage} incorporates audio into the CLIP framework, enabling zero-shot audio classification. ImageBind \citep{girdhar2023imagebindembeddingspacebind} advances multimodal alignment by aligning all modalities with the vision modality. ImageBind-LLM \citep{Han2023ImageBindLLMMI} leverages the joint embedding space of pre-trained ImageBind to efficiently fine-tune LLaMA. NeuroBind \citep{Yang2024NeuroBindTU} learns a generalized representation that unifies various types of brain signal based on the pretrained image embedding space. UniBind \citep{unibind} adaptively constructs class-wise LLM-augmented embedding centers to achieve a unified and balanced representation space. To improve performance on language-related tasks, MEDBind \citep{Gao2024MEDBindUL} and LanguageBind \citep{zhu2024languagebindextendingvideolanguagepretraining} use text data as the main modality to align other modalities. However, these methods have not investigated approaches to enhance emergent capabilities for previously untrained modality pairs.

\paragraph{Contrastive Learning}
Contrastive learning has achieved significant success in learning representations of multimodal data pairs \citep{logeswaran2018efficient, he2020momentum}. The primary objective of these approaches is to maximize the mutual information between the paired views \citep{contrastive, bachman2019learning, Tamkin2020ViewmakerNL}. Loss functions such as NCE \citep{nce}, InfoNCE \citep{Oord2018RepresentationLW}, and MIL-NCE \citep{milnce} have been introduced to facilitate contrastive learning. However, these loss functions focus primarily on aligning two modalities. To extend the framework to multiple modalities, previous studies \citep{guzhov2021audioclipextendingclipimage, alayrac2020self} propose a simple method that sums the loss functions to allow joint learning with various modalities.
According to \citet{understandingInfoNCE}, the loss function of contrastive learning can be split into alignment and uniformity parts. 
The alignment part is responsible for the alignment when using loss function summation for multimodal alignment.
We analyze the direct summation of two loss functions, and then the two alignment parts will interfere in the Appendix \ref{Discussion}.
\section{Preliminaries}

\paragraph{InfoNCE for Modality Alignment.}
Let $\mathcal{C}$ denote an \emph{anchor} modality (e.g., image or text) and $\mathcal{M}_b$ a new modality to be aligned with it.
Given paired embeddings $\{(x_i^b, c_i)\}_{i=1}^N$, the InfoNCE objective for aligning $\mathcal{M}_b$ to $\mathcal{C}$ is
\begin{equation}
    \mathcal{L}^{ \text{infoNCE}}_{\mathcal{M}_b\rightarrow\mathcal{C}} =-\frac{1}{N}\sum_{i=1}^N \log\left(\frac{\exp(\text{sim}(x_i^b,c_i)/\tau)}{\sum_{j=1}^N \exp(\text{sim}(x_i^b,c_j)/\tau)}\right),
\label{eq:infoNCE}
\end{equation}
where $\tau$ is the temperature and $\text{sim}(\cdot,\cdot)$ is a similarity function.
We can rewrite Eq.~\ref{eq:infoNCE} as a sum of (i) a positive-pair alignment term and (ii) a log-partition term over in-batch negatives:
\begin{equation}
\label{eqn:align part}
  \mathcal{L}^{\text{align}}_i = -\text{sim}(x^b_i, c_i) /\tau,
\end{equation}
\begin{equation}
\label{eqn:uniform part}
  \mathcal{L}^{\text{neg}}_i = \log\left(\sum_{j=1}^N \exp(\text{sim}(x^b_i,c_j)/\tau)\right).
\end{equation}
Intuitively, $\mathcal{L}^{\text{align}}_i$ pulls $x^b_i$ toward its matched anchor $c_i$, while $\mathcal{L}^{\text{neg}}_i$ pushes $x^b_i$ away from other anchors, inducing a repulsive effect often associated with representation ``uniformity''.

To characterize the anchor-alignment direction, we compute the gradient of Eq.~\ref{eqn:align part} with respect to $x^b_i$:
\begin{equation}
\label{gradci}
    \bar{c}_i \triangleq -\frac{\partial \mathcal{L}^{\text{align}}_i}{\partial x^b_i}
    = \frac{1}{\tau}\frac{\partial \text{sim}(x_i^b, c_i)}{\partial x_i^b},
\end{equation}
which indicates the (local) direction that increases similarity between $x_i^b$ and $c_i$.
For the common choice of cosine similarity with $\ell_2$-normalized embeddings, $\bar{c}_i$ is aligned with $c_i$ up to a scaling/projection term, providing a concrete geometric notion of the ``anchor-alignment'' direction.

\paragraph{Orthogonal-Subspace Projection.}
EmergentBridge strengthens connectivity between indirectly aligned modalities using a synthesized proxy embedding, while aiming to preserve the anchor alignment induced by InfoNCE.
Given a nonzero direction vector $v\in\mathbb{R}^d$ (instantiated as $v=\bar{c}_i$ in our method), we define the projection onto the orthogonal complement of $\mathrm{span}(v)$:
\begin{equation}
    P^\perp_{v}(x) \overset{\Delta}{=} \left(I-\frac{vv^T}{\|v\|^2+\epsilon}\right)x,
\end{equation}
and optionally normalize it as $\text{normalize}(P^\perp_{v}(x))$.
This operator removes the component of $x$ along $v$.
In EmergentBridge, we apply $P^\perp_v(\cdot)$ to the representation (equivalently, the update direction) used for \emph{proxy alignment}, so that proxy-driven optimization acts in directions that reduce interference with anchor alignment.

\section{Method}
\label{sec:method}

\begin{figure*}[t]
    \centering
    \includegraphics[width=0.95\linewidth]{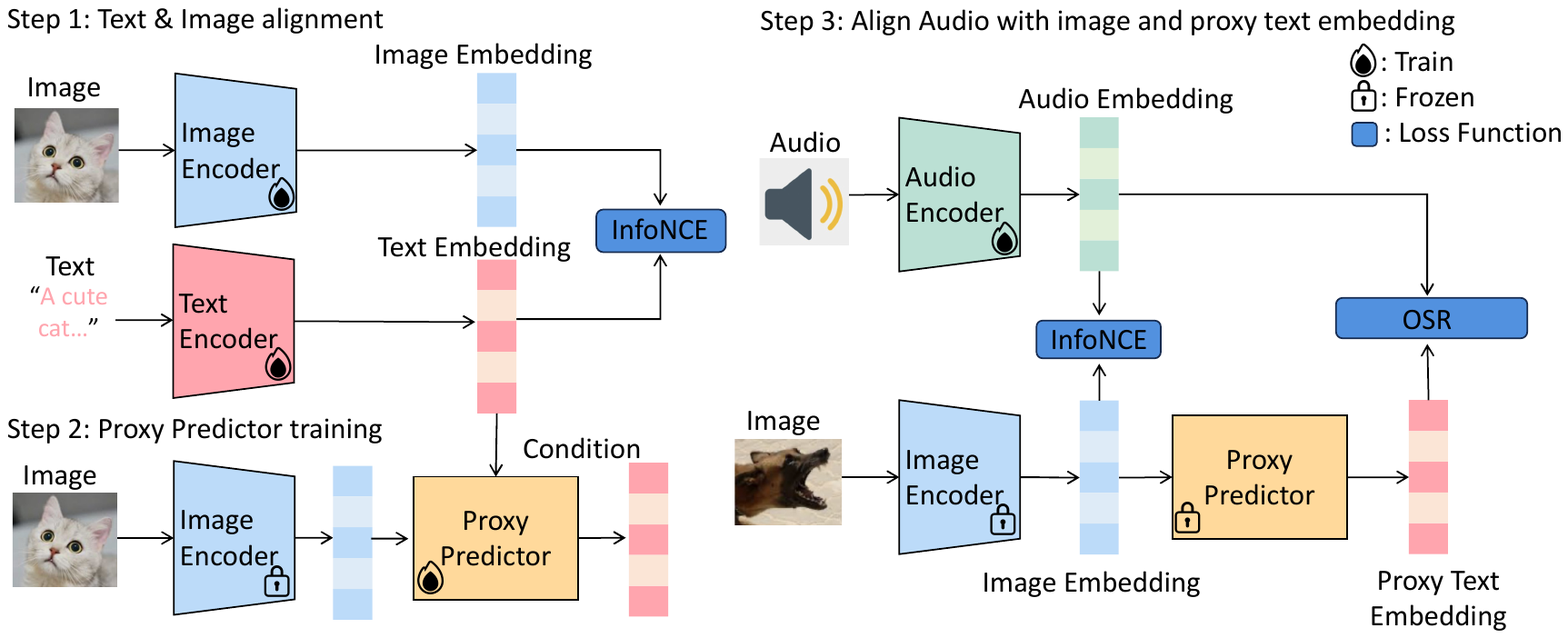}
    \caption{\textbf{An Example Overview of EmergentBridge.} 
    $\mathcal{M}_a,\mathcal{M}_b$ and $\mathcal{C}$ correspond to text, audio, and image, respectively. 
    (Step 1) We align the anchor modality (image) with an already-aligned modality (text) using InfoNCE.
    (Step 2) We train a proxy predictor that maps image embeddings to proxy text embeddings, while keeping the image/text encoders frozen.
    (Step 3) We align audio with both the image embedding and the synthesized text proxy, using InfoNCE together with an orthogonal-subspace regularizer that reduces interference with anchor alignment.
    In practice, when $\mathcal{M}_a$ is already anchor-aligned in a pretrained binding model, Step 1 can be skipped and we only train the proxy predictor and the new modality encoder.}
    \label{fig:pipeline}
\end{figure*}

\textbf{EmergentBridge} is an embedding-level bridging framework that strengthens zero-shot transfer on \emph{unpaired} modality pairs without requiring exhaustive pairwise supervision or re-pretraining large backbones.
The key idea is to introduce an intermediate embedding-level bridge that enables a new modality to align not only with an anchor modality, but also with a synthesized proxy of an already-aligned modality.
As illustrated in Figure~\ref{fig:compare}, this design departs from anchor-only alignment and data-level synthesis by operating directly at the embedding level, explicitly strengthening connectivity between indirectly related modalities while preserving the structure induced by anchor-supervised pairs.

Based on this intuition, the training procedure of EmergentBridge consists of three stages:
\begin{enumerate}
    \item \textbf{Anchor alignment (optional):} align a chosen modality $\mathcal{M}_a$ with the anchor modality $\mathcal{C}$ via contrastive learning.
    \label{step1}
    \item \textbf{Proxy embedding synthesis:} train a proxy predictor that maps anchor embeddings in $\mathcal{C}$ to proxy embeddings of the already-aligned modality $\mathcal{M}_a$.
    \label{step2}
    \item \textbf{Bridge alignment:} align another modality $\mathcal{M}_b$ with both the anchor modality $\mathcal{C}$ and the synthesized proxy embedding of $\mathcal{M}_a$, strengthening cross-modal connectivity without requiring direct supervision between $\mathcal{M}_a$ and $\mathcal{M}_b$.
    \label{step3}
\end{enumerate}

\noindent\textbf{Practical expansion setting.}
When a pretrained binding model already provides an anchor-aligned modality $\mathcal{M}_a$ (e.g., image--text), Step~\ref{step1} can be omitted and we directly reuse the frozen encoders.
When incorporating a new modality $(N{+}1)$, EmergentBridge reuses any previously aligned modality as $\mathcal{M}_a$, treats the new modality as $\mathcal{M}_b$, and repeats Steps~\ref{step2}--\ref{step3}.
Figure~\ref{fig:pipeline} illustrates an example instantiation: text is first aligned with images (anchor), and audio is then aligned with both image embeddings and synthesized text proxies.

\subsection{Anchor Alignment and Proxy Predictor Training}
\label{sec:proxy_predictor}

Following prior binding frameworks, EmergentBridge first establishes (or reuses) anchor alignment between $\mathcal{M}_a$ and $\mathcal{C}$ via contrastive learning.
To bridge other modalities through $\mathcal{M}_a$ without collecting exhaustive pairwise supervision, we train a \emph{proxy predictor} $N_\Theta$ that takes an anchor embedding $c_i \in \mathcal{C}$ as input and predicts the corresponding embedding $x^a_i \in \mathcal{M}_a$:
\begin{equation}
\label{eq:genloss}
\mathcal{L}_{\text{proxy}} = \left\|N_\Theta(c_i) - x^a_i\right\|^2.
\end{equation}
To ensure consistency with encoder outputs, we project the predictor output onto the unit hypersphere,
\begin{equation}
\hat{x}^a_i = \text{normalize}\big(N_\Theta(c_i)\big),
\end{equation}
and use $\hat{x}^a_i$ as a \emph{noisy bridge anchor} (proxy) rather than a ground-truth reconstruction target.
Since encoder embeddings are $\ell_2$-normalized in our setup, minimizing Eq.~\ref{eq:genloss} is consistent (up to constants) with maximizing cosine similarity after normalization.

\subsection{Orthogonal-Subspace Regularizer}
\label{sec:orthoreg}

Given a new modality $\mathcal{M}_b$, our goal is to align it with both the anchor modality $\mathcal{C}$ and the already-aligned modality $\mathcal{M}_a$.
A straightforward approach is to optimize similarity between $x^b_i$ and both $c_i$ and the synthesized proxy embedding $\hat x^a_i$ using InfoNCE.
However, the proxy $\hat x^a_i$ is imperfect; naively enforcing proxy alignment can introduce \emph{gradient interference} and distort the anchor-alignment structure that makes the shared embedding space useful.

To mitigate this, EmergentBridge introduces an \textbf{orthogonal-subspace regularizer} (OSR) that constrains proxy-driven alignment to directions orthogonal to the anchor-alignment direction induced by InfoNCE.
Let $\bar{c}_i$ denote the anchor-alignment direction (Eq.~\ref{gradci}), and define a stop-gradient operator $\mathrm{sg}(\cdot)$ that treats its argument as a constant during backpropagation.
We define the orthogonal-subspace normalization operator
\begin{equation}
\label{eq:tangent_operator}
T_{\bar{c}_i}(x) \overset{\Delta}{=} \text{normalize}\!\left(\left(I-\frac{\mathrm{sg}(\bar{c}_i)\mathrm{sg}(\bar{c}_i)^T}{\|\mathrm{sg}(\bar{c}_i)\|^2+\epsilon}\right)x\right),
\end{equation}
which removes the component of $x$ along the anchor-alignment direction and normalizes it to the unit hypersphere.
Stopping gradients through $\bar{c}_i$ avoids second-order effects; thus the regularizer only constrains the proxy-alignment update to act in directions orthogonal to anchor alignment (Figure~\ref{fig:orthoreg_fig}(a)).

We then apply InfoNCE in this orthogonal subspace to align $\mathcal{M}_b$ to the proxy bridge anchors:
\begin{multline}
\mathcal{L}_{\mathcal{M}_b\rightarrow\mathcal{M}_a}^{\text{osr}}
= -\frac{1}{N}\sum_{i=1}^N \log\left(
\frac{\exp(\text{sim}(T_{\bar{c}_i}(x^b_i), \hat{x}_i^a)/\tau)}
{\sum_{j=1}^N \exp(\text{sim}(T_{\bar{c}_i}(x^b_i), \hat{x}_j^a)/\tau)}\right),
\label{eq:tanloss}
\end{multline}
where we use in-batch negatives $\{\hat x^a_j\}_{j=1}^N$.
Intuitively, this objective encourages $x_i^b$ to move toward $\hat x_i^a$ only through directions that do not interfere with the anchor-alignment direction.

\begin{figure}[t]
    \centering
    \begin{minipage}[t]{0.48\linewidth}
        \centering
        \includegraphics[width=\linewidth]{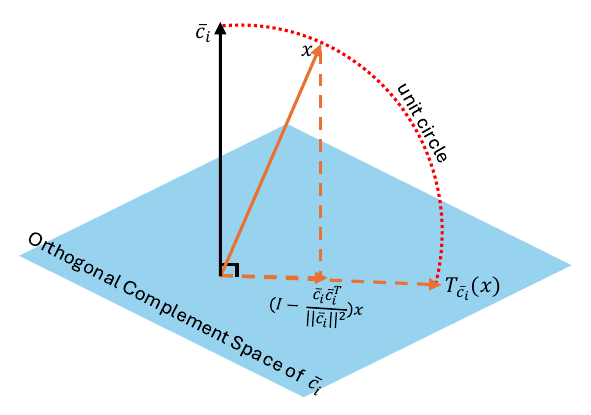}
        \vspace{-1mm}

        {\small \textbf{(a)} Orthogonal-subspace normalization.}
    \end{minipage}
    \hfill
    \begin{minipage}[t]{0.48\linewidth}
        \centering
        \includegraphics[width=\linewidth]{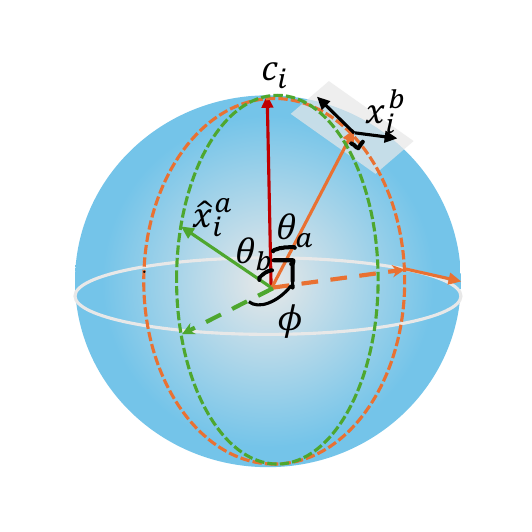}
        \vspace{-1mm}

        {\small \textbf{(b)} Simultaneous alignment.}
    \end{minipage}

    \vspace{-1mm}
    \caption{\textbf{Orthogonal-subspace regularization.}
    (a) $T_{\bar c_i}$ projects $x$ onto the orthogonal complement of the anchor-alignment direction $\bar c_i$ and normalizes it.
    (b) The InfoNCE objective pulls $x_i^b$ toward $c_i$ (black arrow), while the orthogonal-subspace regularizer guides $x_i^b$ toward $\hat x_i^a$ \emph{within} directions orthogonal to $\bar c_i$, reducing interference with anchor alignment.}
    \label{fig:orthoreg_fig}
\end{figure}

Finally, the overall objective for training $\mathcal{M}_b$ combines anchor alignment and orthogonal-subspace proxy alignment:
\begin{equation}
\label{eq:overall_loss}
\mathcal{L} = \mathcal{L}^{\text{infoNCE}} + \lambda \mathcal{L}^{\text{osr}},
\end{equation}
where we use symmetric alignment in both directions,
\begin{equation}
\small
\label{eq:symmetric_losses}
\mathcal{L}^{\text{osr}} = \tfrac{1}{2}\Big(\mathcal{L}^{\text{osr}}_{\mathcal{M}_b\rightarrow\mathcal{M}_a}+\mathcal{L}^{\text{osr}}_{\mathcal{M}_a\rightarrow\mathcal{M}_b}\Big),\quad
\mathcal{L}^{\text{infoNCE}} = \tfrac{1}{2}\Big(\mathcal{L}^{\text{infoNCE}}_{\mathcal{M}_b\rightarrow\mathcal{C}}+\mathcal{L}^{\text{infoNCE}}_{\mathcal{C}\rightarrow\mathcal{M}_b}\Big).
\end{equation}

Figure~\ref{fig:orthoreg_fig}(b) illustrates how the combined objective enables $x^b_i$ to align with both $c_i$ and $\hat{x}^a_i$.
The InfoNCE component draws $x^b_i$ closer to $c_i$, while the orthogonal-subspace regularizer guides $x^b_i$ toward $\hat{x}^a_i$ only through directions orthogonal to the anchor-alignment direction, thereby preserving anchor-modality alignment.
\section{Theoretical Analysis}
\label{sec:theory}

We analyze how the proposed OSR affects \emph{anchor alignment}, i.e., the similarity between $x_i^b$ and $c_i$. 
Our focus is on the positive-pair alignment component of InfoNCE; we treat the negative (log-partition) component as approximately unchanged in a local step and defer its detailed behavior to established analyses of contrastive learning \citep{mindthegap,understandingInfoNCE}. 
This isolates the novel mechanism of EmergentBridge: constraining proxy-driven optimization to an orthogonal subspace to reduce interference with anchor alignment.

\paragraph{Setup.}
Recall the positive-pair alignment term for aligning $\mathcal{M}_b$ to the anchor modality $\mathcal{C}$:
\begin{equation}
\label{eq:theory_align}
\mathcal{L}^{\text{align}}_i(\Theta) = -\text{sim}(x_i^b(\Theta), c_i)/\tau,
\end{equation}
and let $\bar c_i$ denote the anchor-alignment direction in embedding space (Eq.~\ref{gradci}).
Let $\mathcal{L}^{\text{osr}}$ be the OSR proxy-alignment loss (Eq.~\ref{eq:tanloss} in \S\ref{sec:orthoreg}), which aligns $T_{\bar c_i}(x_i^b)$ with the proxy $\hat x_i^a$ in the orthogonal complement of $\bar c_i$. 
In implementation, $\bar c_i$ is treated as a direction indicator and gradients are stopped through it (via $\mathrm{sg}(\cdot)$ in Eq.~\ref{eq:tangent_operator}), so OSR does not introduce second-order effects through $\bar c_i$.

We consider a gradient-based update on parameters $\Theta$:
\begin{equation}
\label{eq:update}
\Delta\Theta = -\delta\Big(\nabla_\Theta \mathcal{L}^{\text{align}} + \lambda \nabla_\Theta \mathcal{L}^{\text{osr}}\Big),
\end{equation}
where $\delta$ is the step size and $\lambda$ weights OSR.

\paragraph{Main result: OSR does not increase the alignment loss to first order under a mild condition.}
The key quantity governing the first-order change of $\mathcal{L}^{\text{align}}$ under the combined update is the inner product between the combined gradient and the alignment gradient. 
We formalize this in the following theorem.

\begin{theorem}[First-order preservation of anchor alignment]
\label{thm:main}
Assume $\bar c_i \neq 0$ and OSR uses stop-gradient through $\bar c_i$ in $T_{\bar c_i}(\cdot)$.
If
\begin{equation}
\label{eq:lambda_condition_general}
\lambda \leq \frac{\|\bar{c}_i\|}{\|\frac{\partial \mathcal{L}^\text{osr}}{\partial T_{\bar{c}_i}(x^b_i)}\|},
\end{equation}
then the combined update direction is non-adversarial to anchor alignment in the sense that
\begin{equation}
\label{eq:nonnegative_inner_product}
\Big(\nabla_\Theta \mathcal{L}^{\text{align}} + \lambda \nabla_\Theta \mathcal{L}^{\text{osr}}\Big)^{\!\top} 
\nabla_\Theta \mathcal{L}^{\text{align}} \;\ge\; 0.
\end{equation}
Consequently, for sufficiently small $\delta$, $\mathcal{L}^{\text{align}}(\Theta+\Delta\Theta)$ does not increase to first order.
\end{theorem}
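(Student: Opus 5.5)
Our plan is to reduce everything to embedding space via the chain rule and then apply Cauchy--Schwarz, using the stop-gradient to keep the orthogonality exact. Write $J \triangleq \partial x_i^b/\partial\Theta$ for the Jacobian of the new-modality embedding with respect to the parameters. Both losses depend on $\Theta$ only through $x_i^b$. Under Eq.~\ref{gradci} we therefore have
\[
\nabla_\Theta \mathcal{L}^{\text{align}} = -J^\top \bar c_i .
\]
For OSR, let $g \triangleq \partial\mathcal{L}^{\text{osr}}/\partial T_{\bar c_i}(x_i^b)$. Because $\bar c_i$ is wrapped in $\mathrm{sg}(\cdot)$, the chain rule gives
\[
\nabla_\Theta \mathcal{L}^{\text{osr}} = J^\top D^\top g ,
\]
where $D$ is the Jacobian of $x\mapsto \text{normalize}(P^\perp_{\bar c_i}x)$. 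It factors as
\[
D = \frac{1}{\|P^\perp x\|}\Big(I - u u^\top\Big) P^\perp_{\bar c_i}, \qquad u = T_{\bar c_i}(x_i^b).
\]
No term arises from differentiating through $\bar c_i$, which is exactly why the stop-gradient assumption is needed.

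\textbf{Step 1 (exact orthogonality in embedding space).} Since $P^\perp_{\bar c_i}$ is symmetric and annihilates $\bar c_i$ (up to the $\epsilon$-regularization, which we treat as $\epsilon\to 0$ or absorb), we get $\bar c_i^\top D^\top g = (D\bar c_i)^\top g = 0$. So the OSR gradient pulled back to embedding space, $h\triangleq D^\top g$, is orthogonal to $\bar c_i$.

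\textbf{Step 2 (expand the inner product).} The quantity in Eq.~\ref{eq:nonnegative_inner_product} is
\[
\|J^\top\bar c_i\|^2 - \lambda\, \bar c_i^\top J J^\top h .
\]
If $J$ had orthonormal rows, so that $JJ^\top = I$, the cross term would equal $\bar c_i^\top h = 0$ and nonnegativity would be immediate for every $\lambda$. In general $JJ^\top\neq I$, so we bound the cross term instead. By Cauchy--Schwarz,
\[
|\bar c_i^\top J J^\top h| \le \|J^\top \bar c_i\|\,\|J^\top h\| .
\]
We then need $\lambda\|J^\top h\|\le \|J^\top\bar c_i\|$, and we aim to obtain it from the hypothesis $\lambda\|g\|\le\|\bar c_i\|$ in Eq.~\ref{eq:lambda_condition_general}.

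\textbf{Step 3 (the main obstacle).} Converting the embedding-space condition into the parameter-space one requires comparing $\|J^\top h\|$ with $\|J^\top \bar c_i\|$. This needs three ingredients:
\begin{itemize}
\item $\|h\|\le \|g\|$. This holds when $\|P^\perp x_i^b\|\ge 1$ up to normalization. Alternatively, we would interpret $g$ as the gradient with respect to the unnormalized projected vector, or assume $x_i^b$ is nearly orthogonal to $\bar c_i$. Otherwise $\|h\|$ could exceed $\|g\|$ by the factor $1/\|P^\perp x\|$.
\item A local isotropy of $J$: $JJ^\top$ acts as a scalar multiple of the identity on $\mathrm{span}\{\bar c_i, h\}$. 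This is the standard ``linearized head'' or NTK-type assumption. Under it the cross term vanishes exactly by Step 1, and the condition on $\lambda$ becomes a robustness margin.
\item Failing exact isotropy, a conditioning assumption. We would present the theorem's inequality as following from Cauchy--Schwarz together with $\|J^\top h\|/\|J^\top\bar c_i\| \le \|h\|/\|\bar c_i\|$.
\end{itemize}
Our first attempt is the isotropy or linearization route, since it is what makes Eq.~\ref{eq:lambda_condition_general} the natural threshold. We would state the needed comparison as an explicit mild assumption, as the theorem's wording ``under a mild condition'' suggests.

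\textbf{Step 4 (first-order conclusion).} A first-order Taylor expansion gives
\[
\mathcal{L}^{\text{align}}(\Theta+\Delta\Theta) = \mathcal{L}^{\text{align}}(\Theta) - \delta\,(\nabla \mathcal{L}^{\text{align}}+\lambda\nabla\mathcal{L}^{\text{osr}})^\top\nabla\mathcal{L}^{\text{align}} + O(\delta^2).
\]
By Steps 2--3 the linear term is $\le 0$, so for sufficiently small $\delta$ the loss does not increase to first order.
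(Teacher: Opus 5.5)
Your reduction to embedding space matches the paper's. The chain rule with the stop-gradient, the factorization of $D$, the orthogonality $\bar c_i^\top h=0$, and the Taylor step all appear there. You are also right that a condition of the form $\lambda\|h\|\le\|\bar c_i\|$ cannot suffice without some hypothesis on $J$. For example, take $\bar c_i=e_1$, $h=\|h\|e_2$, $J^\top e_2=2J^\top e_1$ and $\lambda\|h\|=\|\bar c_i\|$; then the inner product equals $-\|J^\top\bar c_i\|^2$.

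The gap is in Steps 2--3. Applying Cauchy--Schwarz to $\bar c_i^\top JJ^\top h$ discards the orthogonality you just proved, and after that no genuine conditioning hypothesis recovers the stated threshold. If $\kappa=\kappa(JJ^\top)$, the best this route can certify is $|\bar c_i^\top JJ^\top h|\le\sqrt{\kappa}\,\|J^\top\bar c_i\|^2\|h\|/\|\bar c_i\|$. That yields only $\lambda\le\|\bar c_i\|/(\sqrt{\kappa}\,\|h\|)$, which reproduces the theorem's threshold only when $\kappa=1$. Your option (b) is essentially that degenerate case: the cross term vanishes and the hypothesis on $\lambda$ is never used. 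Your option (c) is not a conditioning assumption at all. It is a direction-specific ordering of the Rayleigh quotients of $JJ^\top$ at $h$ and at $\bar c_i$, tailored to the conclusion. So Step 3, as proposed, does not close.

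The missing ingredient is the paper's lemma, which bounds the cross term while keeping the orthogonality: for $x\perp y$ and $M=A^\top A$, $x^\top My\ge-\sqrt{\kappa(M)-1}\,\|Ay\|^2\|x\|/\|y\|$. The paper derives this by Lagrangian duality. Equivalently, the minimum of $x^\top My$ over unit $x\perp y$ is $-\|P^\perp_y My\|$. Then $\|P^\perp_y My\|^2=\|My\|^2-(y^\top My)^2/\|y\|^2\le(\kappa(M)-1)(y^\top My)^2/\|y\|^2$, using $\|My\|^2\le\mu_{\max}\,y^\top My$ and $y^\top My\ge\mu_{\min}\|y\|^2$ for the extreme eigenvalues of $M$.

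With $A=J^\top$, $y=\bar c_i$, $x=h$, this gives an inner product of at least $\|J^\top\bar c_i\|^2\bigl(1-\lambda\sqrt{\kappa-1}\,\|h\|/\|\bar c_i\|\bigr)$. The factor $\sqrt{\kappa-1}$, rather than your $\sqrt{\kappa}$, is what interpolates continuously to your isotropic case. The paper then invokes $\kappa(JJ^\top)<2$, taken from NTK-type results rather than stated in the theorem, so the factor is at most $1$. It concludes once $\|h\|\le\|g\|$.

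On that last step your caution is warranted. The paper obtains $\|h\|\le\|g\|$ by treating the Jacobian of the normalization as the projection $I-uu^\top$. In doing so it drops the factor $1/\|P^\perp_{\bar c_i}x_i^b\|\ge 1$ that you correctly keep in $D$, so that step does need an extra assumption of the kind you list.
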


\noindent\textbf{Implication via Taylor expansion.}
By a first-order Taylor approximation \citep{rudin1976principles},
\begin{align}
\label{eq:taylor}
\mathcal{L}^{\text{align}}(\Theta+\Delta\Theta) 
&\approx \mathcal{L}^{\text{align}}(\Theta) 
+ \nabla_\Theta \mathcal{L}^{\text{align}}(\Theta)^{\!\top}\Delta\Theta \nonumber\\
&= \mathcal{L}^{\text{align}}(\Theta) 
-\delta\Big(\nabla_\Theta \mathcal{L}^{\text{align}} + \lambda \nabla_\Theta \mathcal{L}^{\text{osr}}\Big)^{\!\top}
\nabla_\Theta \mathcal{L}^{\text{align}}.
\end{align}
Under Eq.~\ref{eq:nonnegative_inner_product}, the last term is non-positive, hence $\mathcal{L}^{\text{align}}$ does not increase to first order. 
Equivalently, the anchor similarity $\text{sim}(x_i^b,c_i)$ is preserved (or improved) locally under the combined update.

\paragraph{A concrete corollary for cosine similarity: why $\lambda \le 1$ is a stable regime.}
We now instantiate the above condition under the common choice of cosine similarity with $\ell_2$-normalized embeddings.
When $\text{sim}(u,v)=u^\top v$ and $\|x_i^b\|=\|c_i\|=\|\hat x_i^a\|=1$, the gradient of the alignment term w.r.t.\ the embedding satisfies
\begin{equation}
\left\|-\frac{\partial \mathcal{L}^{\text{align}}_i}{\partial x_i^b}\right\|
=
\left\|\frac{1}{\tau}\frac{\partial (c_i^\top x_i^b)}{\partial x_i^b}\right\|
=
\frac{1}{\tau}\|c_i\|
=
\frac{1}{\tau}.
\end{equation}
Similarly, for the positive-pair part of OSR in the orthogonal subspace, the gradient magnitude w.r.t.\ the normalized representation $T_{\bar c_i}(x_i^b)$ is
\begin{equation}
\left\|-\frac{\partial}{\partial T_{\bar c_i}(x_i^b)}\left(\frac{\hat x_i^{a\top} T_{\bar c_i}(x_i^b)}{\tau}\right)\right\|
=
\frac{1}{\tau}\|\hat x_i^a\|
=
\frac{1}{\tau}.
\end{equation}
Therefore, the relative gradient scales are comparable, and Eq.~\ref{eq:lambda_condition_general} admits the intuitive stable regime $\lambda \le 1$.
Empirically, we observe peak zero-shot performance near this regime, consistent with the theory.

\paragraph{On the negative (log-partition) term.}
The above analysis focuses on the positive-pair alignment component to highlight the new effect introduced by OSR.
The negative (log-partition) term can also affect similarity through interactions with in-batch negatives and sampling dynamics. 
Such effects have been analyzed extensively for standard contrastive learning objectives \citep{mindthegap,understandingInfoNCE}. 
Since EmergentBridge leaves the anchor InfoNCE objective unchanged and only constrains the \emph{proxy-driven} update via orthogonalization, the negative-term behavior follows the same fundamental principles as in prior work; we therefore omit a redundant analysis and focus on the orthogonal-subspace mechanism unique to EmergentBridge.
\begin{table*}[t]
\centering
\small
\setlength{\tabcolsep}{5pt}
\caption{\textbf{X--Language classification.} 
$^{*}$ denotes \textbf{emergent zero-shot} (image anchor). 
We report top-1 accuracy (\%) for all datasets except AudioSet (mAP). Best results are in bold.}
\begin{tabular}{l|c|c|cc|ccc}
\toprule
\textbf{Method} & \textbf{Anchor} & \textbf{Infrared} & \multicolumn{2}{c|}{\textbf{Depth}} & \multicolumn{3}{c}{\textbf{Audio}} \\
 & \textbf{Modality} & LLVIP & NYU-D & SUN & AudioSet & ESC-50 & VGG-S \\
\midrule
OpenCLIP \citep{cherti2023reproducible} & -- & 82.2 & 45.4 & 25.4 & -- & -- & -- \\
DepthSwin \citep{girdhar2022omnivore}   & -- & --   & 72.5 & 63.1 & -- & -- & -- \\
JointCRF \citep{wang2017learning}       & -- & --   & 65.8 & 63.6 & -- & -- & -- \\
DFCR \citep{8010878}                    & -- & --   & 65.3 & 56.3 & -- & -- & -- \\
AudioCLIP \citep{guzhov2021audioclipextendingclipimage} & -- & -- & -- & -- & 28.4 & 68.6 & 47.4 \\
CLAP \citep{elizalde2023clap}           & -- & --   & --   & --   & 23.1 & 92.6 & 46.2 \\
WAV2CLIP \citep{Wu2021Wav2CLIPLR}       & -- & --   & --   & --   & 0.71 & 41.4 & 10.0 \\
\midrule
LanguageBind            & Text  & 87.2 & 65.1 & --   & 27.7 & 91.8 & 28.9 \\
\textbf{EmergentBridge} & Text  & \textbf{85.1} & \textbf{65.8} & -- & \textbf{28.1} & \textbf{92.0} & \textbf{29.3} \\
\midrule
ImageBind$^{*}$                 & Image & 63.4 & 54.0 & 35.1 & 17.6 & 66.9 & 27.8 \\
\textbf{EmergentBridge}$^{*}$   & Image & \textbf{73.7} & \textbf{70.1} & \textbf{40.3} & \textbf{25.4} & \textbf{68.4} & \textbf{36.3} \\
\midrule
Absolute SOTA & -- & -- & 79.4 & 64.9 & 49.6 & 97.0 & 52.5 \\
\bottomrule
\end{tabular}
\label{tab:zero-shot x-language}
\end{table*}

\begin{figure*}
    \centering
    \includegraphics[width=0.95\linewidth]{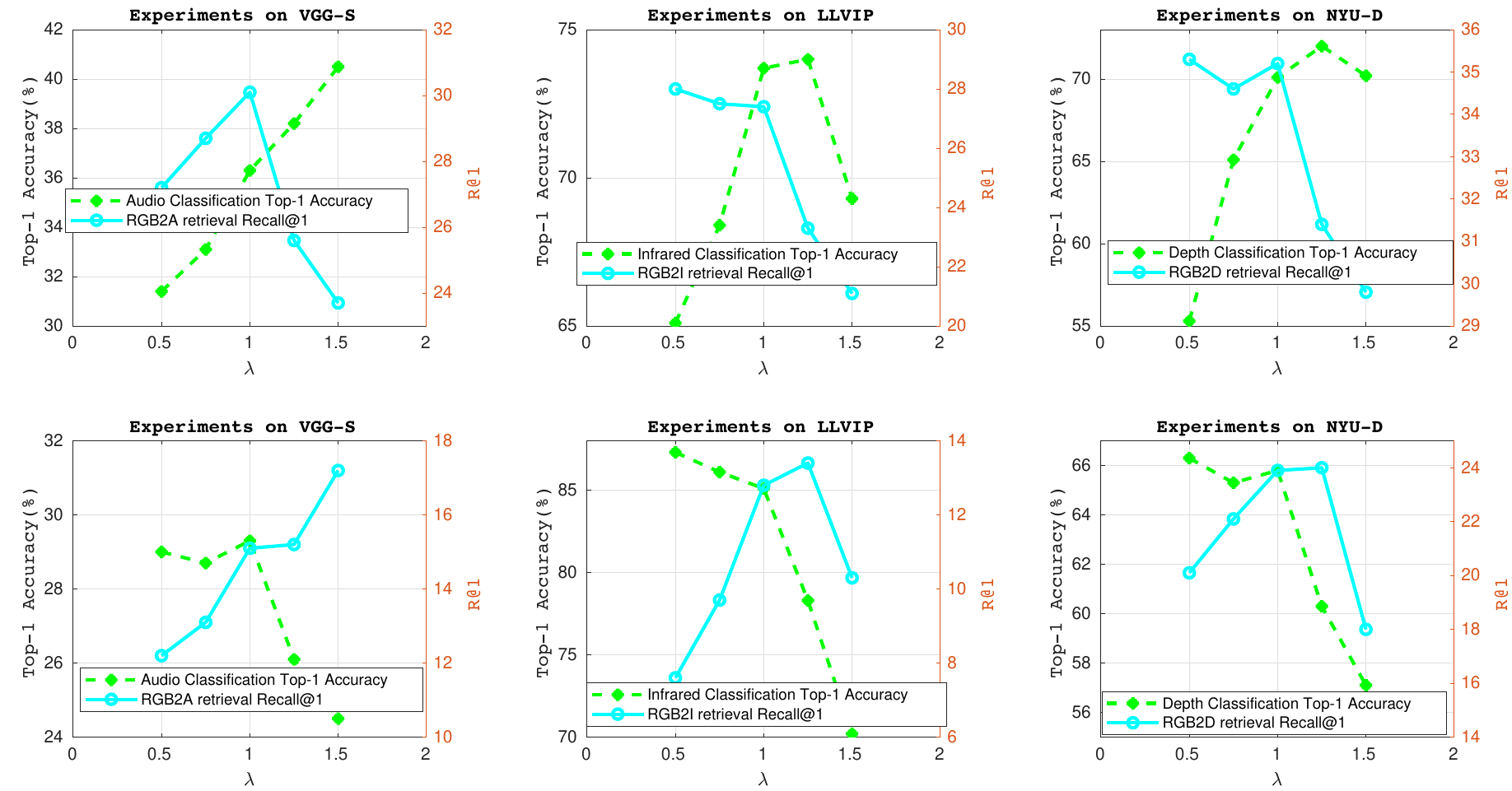}
    \caption{\textbf{Experimental Results on Classification and Retrieval Tasks After Varying Hyperparameters $\lambda$:} The first row of figures, arranged from left to right, presents the results on the VGG-S, LLVIP, and NYU-D datasets, with images serving as the core modality. The second row of figures, similarly ordered, displays the results on the same datasets, but with text serving as the core modality.
    }
    \label{fig:changelambda}
\end{figure*}

\begin{table}[t]
\centering
\small
\setlength{\tabcolsep}{4pt}
\caption{\textbf{Zero-shot audio--language retrieval.} 
$^{*}$ denotes \textbf{emergent zero-shot} (image anchor). We report Recall@K.}
\begin{tabular}{l|c|cc|cc}
\toprule
\multirow{2}{*}{\textbf{Method}} & \textbf{Anchor} & \multicolumn{2}{c|}{\textbf{Clotho}} & \multicolumn{2}{c}{\textbf{AudioCaps}} \\
& \textbf{Modality} & R@1 & R@10 & R@1 & R@10 \\
\midrule
AVFIC \citep{AVFIC} & -- & 3.0 & 17.5 & 8.7 & 37.7 \\
AudioCLIP \citep{guzhov2021audioclipextendingclipimage} & -- & 3.20 & 20.3 & 3.53 & 31.6 \\
WAV2CLIP \citep{Wu2021Wav2CLIPLR} & -- & 0.78 & 12.1 & 0.88 & 15.3 \\
C-MCR \citep{C-MCR} & -- & 8.37 & 36.7 & 15.76 & 48.1 \\
\midrule
LanguageBind & Text & \textbf{12.1} & \textbf{44.0} & \textbf{12.2} & \textbf{53.2} \\
EmergentBridge & Text & 11.7 & 41.8 & 11.8 & 52.1 \\
\midrule
ImageBind$^{*}$ & Image & 6.0 & 28.4 & 9.3 & 42.3 \\
EmergentBridge$^{*}$ & Image & \textbf{10.0} & \textbf{33.4} & \textbf{10.1} & \textbf{49.2} \\
\bottomrule
\end{tabular}
\label{tab:Zero-shot Audio-Language retrieval}
\end{table}

\begin{table}[t]
\centering
\small
\setlength{\tabcolsep}{3pt}
\renewcommand{\arraystretch}{1.05}
\caption{\textbf{RGB$\rightarrow$X retrieval.}
$^{*}$ denotes the \textbf{emergent} setting with text as the anchor; we report Recall@1. Best results are in bold.}
\begin{tabular*}{\linewidth}{@{\extracolsep{\fill}} l l c r @{}}
\toprule
\textbf{Dataset} & \textbf{Method} & \textbf{Anchor} & \textbf{R@1} \\
\midrule
\multirow{4}{*}{AVE}
& ImageBind                    & Image & 36.9 \\
& \textbf{EmergentBridge}      & Image & \textbf{37.0} \\
& LanguageBind$^{*}$           & Text  & 10.6 \\
& \textbf{EmergentBridge}$^{*}$& Text  & \textbf{15.1} \\
\midrule
\multirow{4}{*}{VGG-S}
& ImageBind                    & Image & 28.7 \\
& \textbf{EmergentBridge}      & Image & \textbf{30.1} \\
& LanguageBind$^{*}$           & Text  & 10.0 \\
& \textbf{EmergentBridge}$^{*}$& Text  & \textbf{15.1} \\

\midrule
\multirow{4}{*}{LLVIP}
& ImageBind                    & Image & 26.3 \\
& \textbf{EmergentBridge}      & Image & \textbf{27.4} \\
& LanguageBind$^{*}$           & Text  & 7.5 \\
& \textbf{EmergentBridge}$^{*}$& Text  & \textbf{12.8} \\

\midrule
\multirow{4}{*}{NYU-D}
& ImageBind                    & Image & 34.7 \\
& \textbf{EmergentBridge}      & Image & \textbf{35.2} \\
& LanguageBind$^{*}$           & Text  & 17.9 \\
& \textbf{EmergentBridge}$^{*}$& Text  & \textbf{23.9} \\
\bottomrule
\end{tabular*}
\label{tab: Cross zero-shot retrieval}
\vspace{-3mm}
\end{table}

\begin{table*}[t]
\centering
\small
\setlength{\tabcolsep}{4pt}
\caption{\textbf{Comparison of proxy embedding synthesis methods (Step~2).}
We instantiate the synthesis network $N_\Theta$ with different architectures to generate proxy embeddings $\hat{x}^a$ from anchor embeddings $c$.
We report top-1 accuracy (\%) on $X\!\rightarrow\!T$ and Recall@1 (\%) on $\mathrm{RGB}\!\rightarrow\!X$.
Diff denotes conditional diffusion-based synthesis. 
$^{*}$ marks \textbf{emergent zero-shot} tasks (i.e., unpaired modality pairs). Best results are in bold.}
\begin{tabular}{c|c|ccccc|ccccc}
\toprule
\multirow{2}{*}{\textbf{Dataset}} & \multirow{2}{*}{\textbf{Task}} 
& \multicolumn{5}{c|}{\textbf{Image Anchor}} 
& \multicolumn{5}{c}{\textbf{Text Anchor}} \\
& & Diff & ResNet & VAE & Noise & C-MCR & Diff & ResNet & VAE & Noise & C-MCR \\
\midrule
\multirow{2}{*}{VGG-S} 
& A$\rightarrow$T$^{*}$  & \textbf{36.3} & 30.7 & 32.1 & 23.7 & 26.7 & \textbf{29.3} & 27.2 & 28.3 & 26.4 & 29.1 \\
& RGB$\rightarrow$A      & \textbf{30.1} & 29.3 & 28.1 & 27.7 & 28.3 & \textbf{15.1}$^{*}$ & 12.3$^{*}$ & 13.5$^{*}$ & 8.1$^{*}$ & 9.5$^{*}$ \\
\midrule
\multirow{2}{*}{LLVIP} 
& I$\rightarrow$T$^{*}$  & \textbf{73.7} & 68.1 & 64.3 & 59.1 & 61.7 & 85.1 & 83.1 & \textbf{87.5} & 86.2 & 86.9 \\
& RGB$\rightarrow$I      & \textbf{27.4} & 26.5 & 27.1 & 26.1 & 25.9 & \textbf{12.8}$^{*}$ & 10.3$^{*}$ & 9.1$^{*}$ & 5.2$^{*}$ & 8.9$^{*}$ \\
\midrule
\multirow{2}{*}{NYU-D} 
& D$\rightarrow$T$^{*}$  & \textbf{70.1} & 67.2 & 69.3 & 50.2 & 58.4 & \textbf{65.8} & 65.7 & 64.8 & 64.1 & 65.4 \\
& RGB$\rightarrow$D      & \textbf{35.2} & 35.0 & 33.9 & 33.1 & 34.4 & \textbf{23.9}$^{*}$ & 20.7$^{*}$ & 22.4$^{*}$ & 14.6$^{*}$ & 18.9$^{*}$ \\
\bottomrule
\end{tabular}
\label{tab:ab gen net}
\end{table*}

\begin{table}[t]
\centering
\small
\setlength{\tabcolsep}{4pt}
\caption{\textbf{Replacing OSR with direct InfoNCE for proxy alignment.}
We replace OSR by removing the projection $T_{\bar{c}_i}(\cdot)$ and directly applying InfoNCE between $x^b$ and the proxy embedding $\hat{x}^a$ (keeping $\lambda{=}1$).
We report Recall@1 for image-anchored retrieval and top-1 accuracy (\%) for text-anchored classification.}
\begin{tabular}{c|c|c|cc}
\toprule
\multirow{2}{*}{\textbf{Dataset}} & \multirow{2}{*}{\textbf{Task}} & \multirow{2}{*}{\textbf{Anchor}} & \multicolumn{2}{c}{\textbf{Proxy Alignment}}\\
& & & \textbf{OSR} & \textbf{Direct InfoNCE}\\
\midrule
\multirow{2}{*}{VGG-S}  & RGB$\rightarrow$A & Image & \textbf{30.1} & 25.2 \\
                        & A$\rightarrow$T   & Text  & \textbf{29.3} & 22.2 \\
\midrule
AVE                    & RGB$\rightarrow$A & Image & \textbf{37.0} & 32.3 \\
\midrule
\multirow{2}{*}{NYU-D}  & RGB$\rightarrow$D & Image & \textbf{35.2} & 29.1 \\
                        & D$\rightarrow$T   & Text  & \textbf{65.8} & 55.7 \\
\midrule
\multirow{2}{*}{LLVIP}  & RGB$\rightarrow$I & Image & \textbf{27.4} & 22.4 \\
                        & I$\rightarrow$T   & Text  & \textbf{85.1} & 64.1 \\
\midrule
ESC-50                 & A$\rightarrow$T   & Text  & \textbf{92.0} & 72.3 \\
\bottomrule
\end{tabular}
\label{tab:x3}
\end{table}

\section{Experiments and Results}
\label{EXPERIMENTS}

We evaluate \textbf{EmergentBridge} on two aspects: (i) \emph{emergent} zero-shot transfer over \emph{unpaired} modality pairs, and (ii) preservation of performance on anchor-supervised pairs. We further conduct ablations to isolate the effects of proxy embedding synthesis (Step~\ref{step2}), the regularization weight $\lambda$, and the proposed orthogonal-subspace regularizer (Step~\ref{step3}).

\subsection{Settings}

\paragraph{Modalities and Datasets.}
We consider $\mathcal{C}\in\{\text{image},\text{text}\}$ as the \emph{anchor} modality, as these are the only modalities paired with multiple others across our benchmarks. For each non-anchor modality $\mathcal{M}_b\in\{\text{audio},\text{depth},\text{infrared}\}$, we train EmergentBridge using AudioSet \citep{AudioCaps}, SUN \citep{SUN-D}, and LLVIP \citep{LLVIP}, respectively. All remaining datasets are held out for zero-shot evaluation; detailed dataset descriptions are provided in Appendix~\ref{Downstream Datasets}.

\paragraph{Implementation Details.}
We use cosine similarity $\text{sim}(\cdot,\cdot)$ throughout. The anchor encoder for $\mathcal{C}$ is initialized from ImageBind-Huge \citep{girdhar2023imagebindembeddingspacebind} or LanguageBind \citep{zhu2024languagebindextendingvideolanguagepretraining} and kept frozen during training. 
We first align an already-aligned modality $\mathcal{M}_a$ to $\mathcal{C}$ (Step~\ref{step1}), then train the synthesis network $N_\Theta$ to map anchor embeddings $c_i$ to proxy embeddings $\hat{x}^a_i$ (Step~\ref{step2}), and finally train the encoder of the target modality $\mathcal{M}_b$ to produce $x_i^b$ aligned with both $c_i$ and $\hat{x}^a_i$ under the joint objective in Eq.~\ref{eq:overall_loss} (Step~\ref{step3}). 
Unless otherwise specified, we instantiate $N_\Theta$ with a conditional diffusion model to model the conditional distribution $p(x^a\!\mid\!c)$ rather than a point estimate, which is beneficial when the cross-modal embedding correspondence is multi-modal. For training and evaluation, we use a deterministic sampling rule to obtain a single, stable proxy embedding $\hat{x}^a_i$ for each $c_i$, and empirically validate this choice against deterministic regressors in  Sec.\ref{The effect of generative networks}.
Additional implementation details are provided in Appendix~\ref{Implementation Details}.

\subsection{Main Results}

\paragraph{Emergent Zero-shot Performance.}
EmergentBridge substantially improves \emph{emergent} zero-shot transfer on \emph{unpaired} modality pairs under both image-anchor and text-anchor settings (Tables~\ref{tab:zero-shot x-language}--\ref{tab: Cross zero-shot retrieval}). 
With \textbf{image} as the anchor, EmergentBridge consistently improves emergent $X\!\rightarrow\!T$ classification over ImageBind on six datasets (Table~\ref{tab:zero-shot x-language}), with notable top-1 gains of 10.3\%, 16.1\%, and 8.5\% on LLVIP, NYU-D, and VGG-S, respectively. Overall, it achieves an average \emph{relative} improvement of 24.7\% on emergent classification. 
For emergent audio--language retrieval (Table~\ref{tab:Zero-shot Audio-Language retrieval}), EmergentBridge improves ImageBind by 5.0\% and 6.9\% in R@10 on Clotho and AudioCaps. 

With \textbf{text} as the anchor, EmergentBridge improves LanguageBind on emergent RGB$\rightarrow X$ retrieval (Table~\ref{tab: Cross zero-shot retrieval}), yielding R@1 gains of 4.5\%, 5.1\%, 5.3\%, and 6.0\% on AVE, VGG-S, LLVIP, and NYU-D, respectively. On average, EmergentBridge improves emergent retrieval by 49.4\% (\emph{relative}) over prior binding baselines. 
These results indicate that strengthening cross-modality connectivity via proxy bridging translates into consistently better zero-shot transfer on unseen modality pairs; the specific contribution of the orthogonal-subspace regularizer is further isolated in the ablation study.

\paragraph{Anchor-Alignment Preservation.}
Beyond emergent transfer, we examine whether EmergentBridge preserves performance on anchor-supervised pairs that define the original embedding space. 
With \textbf{image} as the anchor, EmergentBridge matches or slightly improves ImageBind on RGB$\rightarrow X$ retrieval (Table~\ref{tab: Cross zero-shot retrieval}), with small R@1 gains of 0.1\%, 1.3\%, and 0.5\% on AVE, VGG-S, and NYU-D, respectively. 
With \textbf{text} as the anchor, EmergentBridge yields modest improvements over LanguageBind on $X\!\rightarrow\!T$ classification for VGG-S, NYU-D, and ESC-50 (Table~\ref{tab:zero-shot x-language}), while incurring minor degradations on LLVIP and the audio--language retrieval benchmarks (Table~\ref{tab:Zero-shot Audio-Language retrieval}). 
Overall, EmergentBridge improves emergent transfer while keeping anchor-aligned performance broadly stable; the interference-reduction role of the orthogonal-subspace regularizer is validated by the controlled replacement study in \S\ref{WithoutTangentTerm}.

\subsection{Ablation Study}

\paragraph{Comparison of Embedding Synthesis Methods.}
\label{The effect of generative networks}

This ablation examines how the choice of the synthesis network in Step~2 affects emergent transfer. 
Recall that EmergentBridge learns a proxy generator $N_\Theta$ that maps anchor embeddings $c_i$ to proxy embeddings $\hat{x}^a_i$, which are then used to bridge an unpaired modality in Step~3. 
By default, we instantiate $N_\Theta$ as a \emph{conditional diffusion} model to better capture potentially multi-modal mappings from $c$ to $x^a$; in all experiments we use a deterministic sampling rule to obtain a single stable proxy embedding for each $c_i$ (details in Appendix~\ref{Implementation Details}).

To assess whether simpler alternatives suffice, we replace $N_\Theta$ with deterministic predictors (ResNet \citep{ResNet}, VAE \citep{VAE}, and C-MCR \citep{C-MCR}) and a perturbation baseline that adds Gaussian noise ($\mathcal{N}(0,10^{-3})$) to the anchor embeddings before normalization. 
Table~\ref{tab:ab gen net} shows that diffusion-based synthesis yields the strongest and most consistent gains on \emph{emergent zero-shot} tasks across datasets. 
For example, on emergent $X\!\rightarrow\!T$ classification, diffusion achieves 36.3\%, 73.7\%, and 70.1\% top-1 accuracy on VGG-S, LLVIP, and NYU-D, respectively, outperforming the alternatives by clear margins; similar trends hold for emergent $\mathrm{RGB}\!\rightarrow\!X$ retrieval. 
Appendix~\ref{cdfanalysis} further supports this result by showing that diffusion proxies exhibit higher cosine similarity to real embeddings, indicating better fidelity in capturing cross-modal structure.

\paragraph{\texorpdfstring{Effect of Hyperparameter $\lambda$.}{Effect of Hyperparameter lambda.}}
\label{changelambda}

Figure~\ref{fig:changelambda} studies the trade-off controlled by $\lambda$ between improving \emph{emergent} transfer (green curves: unpaired $X\!\rightarrow\!T$ classification) and preserving anchor-aligned utility (blue curves: $\mathrm{RGB}\!\rightarrow\!X$ retrieval R@1). 
Across datasets and for both anchor choices (image in the first row, text in the second), increasing $\lambda$ from a small value consistently strengthens emergent zero-shot performance, indicating that the proxy-alignment term provides complementary cross-modal signals. 
However, overly large $\lambda$ can degrade retrieval, and in several cases also causes emergent performance to plateau or drop, suggesting increased interference when proxy-driven updates dominate.

This behavior aligns with our theoretical analysis (Theorem~\ref{thm:main}) in that it provides a \emph{sufficient} regime where the orthogonal-subspace regularizer does not disrupt anchor-alignment descent to first order. 
Empirically, the best performance typically occurs around $\lambda\!\approx\!1$ (and can be slightly larger on some datasets), which is plausible given effects not captured by the simplified analysis, such as the negative/log-partition term $\mathcal{L}^{\text{uniform}}$, finite-batch optimization, and residual proxy noise.

\paragraph{Replacing the Orthogonal-Subspace Regularizer with Direct InfoNCE}
\label{WithoutTangentTerm}

We isolate the effect of the orthogonal-subspace regularizer (OSR) by replacing the proxy-alignment term with a standard contrastive objective. 
Concretely, we remove the orthogonal projection $T_{\bar{c}_i}(\cdot)$ in Eq.~\ref{eq:tanloss} and instead align $x_i^b$ to the synthesized proxy $\hat{x}_i^a$ using a direct InfoNCE loss. 
That is, we replace $\mathcal{L}^{\text{tan}}_{\mathcal{M}_b\rightarrow\mathcal{M}_a}+\mathcal{L}^{\text{tan}}_{\mathcal{M}_a\rightarrow\mathcal{M}_b}$ with $\mathcal{L}^{\text{infoNCE}}_{\mathcal{M}_b\rightarrow\mathcal{M}_a}+\mathcal{L}^{\text{infoNCE}}_{\mathcal{M}_a\rightarrow\mathcal{M}_b}$, while keeping $\lambda=1$.

As shown in Table~\ref{tab:x3}, direct proxy alignment leads to substantial drops across all evaluated settings. 
For image-anchored retrieval (AVE/VGG-S/NYU-D/LLVIP), R@1 decreases noticeably when replacing OSR with direct InfoNCE. 
For text-anchored classification (VGG-S/ESC-50/NYU-D/LLVIP), top-1 accuracy also degrades sharply (e.g., 29.3$\rightarrow$22.2 on VGG-S and 92.0$\rightarrow$72.3 on ESC-50). 
These results support our motivation that a noisy proxy can induce \emph{gradient interference}: directly pulling $x_i^b$ toward $\hat{x}_i^a$ may compete with the anchor-alignment objective, whereas OSR constrains proxy-driven updates to the orthogonal subspace and thereby mitigates such interference while strengthening emergent transfer.
\section{Conclusion}

In this work, we introduce \textbf{EmergentBridge}, a framework for strengthening \emph{emergent alignment} in unified multimodal embedding models when supervision is available for only a small subset of modality pairs.
By operating at the embedding level, EmergentBridge enables practical expansion to new modalities without curating exhaustive pairwise data or re-pretraining large backbones.

The central insight of our approach is to bridge indirectly related modalities through synthesized proxy embeddings, allowing a new modality to align jointly with an anchor and an already-aligned modality.
To ensure robustness, we further introduce an orthogonal-subspace regularizer that mitigates interference from imperfect proxy signals while preserving well-supervised anchor alignments.
Across nine datasets and multiple modality pairs, EmergentBridge consistently improves zero-shot transfer in retrieval and classification, while maintaining strong performance on anchor-supervised tasks.

More broadly, our results suggest that explicitly modeling relationships between unpaired modalities is critical for scalable and reliable multimodal learning.
We hope this work encourages future research on embedding-level mechanisms for cross-modal generalization under sparse and heterogeneous supervision.
Future work will explore stronger proxy synthesis objectives and adaptive regularization strategies that further improve robustness under larger domain gaps and noisier supervision.

\bibliographystyle{ACM-Reference-Format}
\bibliography{bibfile}

\appendix

\section{Implementation Details}
\label{Implementation Details}
Tables~\ref{image set} and~\ref{text set} summarize the key hyperparameters used in our experiments, including optimizer settings and modality-specific temperatures.
All experiments were conducted on 4$\times$24GB NVIDIA RTX 4090 GPUs and 4$\times$48GB NVIDIA A40 GPUs.
\subsection{Prompt Templates}
\label{Prompt Templates}
For all evaluations, we adopt the default set of templates provided by CLIP \citep{radford2021learningtransferablevisualmodels}. Notably, the same templates are utilized for non-visual modalities, such as audio and depth, as the training process relies solely on the semantic or textual supervision associated with images.
For non-visual modalities (e.g., audio and depth), we follow the same template set, since supervision in our setting is mediated through the anchor modality and text semantics in the shared embedding space.
\subsection{Model Architecture}
\label{app:model_arch}

\paragraph{Diffusion-based Proxy Predictor $N_\Theta$ (Embedding Synthesis).}
We instantiate the proxy synthesis network $N_\Theta$ as a \emph{conditional diffusion} model operating entirely in the embedding space. Concretely, we train two predictors: (i) an image-conditioned model that maps image embeddings to proxy text embeddings for the \textbf{image-anchor} setting, and (ii) a text-conditioned model that maps text embeddings to proxy image embeddings for the \textbf{text-anchor} setting. Both predictors share the same Transformer backbone: 6 layers, 8 attention heads (128 dimensions per head), and a 1024-dimensional embedding width, totaling $\sim$30M parameters. 

Training follows the proxy regression objective in Eq.~\ref{eq:genloss}. We use 100 diffusion steps and apply 5\% embedding dropout for classifier-free guidance. Optimization uses AdamW with learning rate $1\times 10^{-4}$ and batch size 128. During training and evaluation, we use a deterministic sampling rule to produce a single stable proxy embedding for each conditioning anchor embedding (see Appendix~\ref{Implementation Details} for the sampling configuration). This design provides the proxy bridge required by EmergentBridge \emph{without} generating raw data.

\paragraph{Multimodal Encoders.}
For both anchor choices, we use the same encoder architecture for each non-anchor modality. Following \citet{girdhar2023imagebindembeddingspacebind}, we adopt a 12-layer Vision Transformer (ViT) with 1024-dimensional hidden size and patch size 16 (stride 10) to encode vision-like inputs for \textsc{vision}, \textsc{depth}, and \textsc{infrared}. Depth and thermal/infrared inputs are treated as single-channel images and processed by the same ViT architecture to ensure consistent capacity across modalities. For audio, we convert 2-second clips sampled at 16~kHz into log-mel spectrograms with 128 mel bins and feed them to the audio encoder.

Anchor encoders are initialized from ImageBind-Huge and LanguageBind (for image and text anchors, respectively) and kept frozen throughout training. We train the proxy predictor $N_\Theta$ and fine-tune the encoder of each newly added modality under the EmergentBridge objective in Eq.~\ref{eq:overall_loss}.

\paragraph{Temperature.}
The InfoNCE loss in Eq.~\ref{eq:infoNCE} and the orthogonal-subspace proxy-alignment loss in Eq.~\ref{eq:tanloss} share the same temperature $\tau$ for a given modality during training. We found fixed temperatures to work better than learnable ones. The modality-specific temperatures used in our experiments are reported in Tables~\ref{image set} and~\ref{text set}.

\begin{table}[]
\caption{\textbf{Training Settings in ImageCore Model}}
\centering
\vspace{2mm}
\resizebox{0.48\textwidth}{!}{\begin{tabular}{c|ccc}
Config                 & \multicolumn{1}{c|}{\textbf{Audio}} & \multicolumn{1}{c|}{\textbf{Depth}} & \textbf{Infrared} \\ \hline
Encoder                & \multicolumn{3}{c}{ViT-Huge}                                                                  \\
Number of Heads        & \multicolumn{1}{c|}{12}             & \multicolumn{1}{c|}{8}              & 12                \\
Optimizer              & \multicolumn{3}{c}{AdamW}                                                                     \\
Optimizer Momentum     & \multicolumn{3}{c}{$\beta_1 = 0.9, \beta_2 = 0.95$}                                           \\
Epochs                 & \multicolumn{1}{c|}{8}              & \multicolumn{1}{c|}{2}              & 2                 \\
Learning rate          & \multicolumn{1}{c|}{5e-4}           & \multicolumn{1}{c|}{5e-4}           & 1e-4              \\
Temperature            & \multicolumn{1}{c|}{0.07}           & \multicolumn{1}{c|}{0.2}            & 0.1               \\
Weight decay           & \multicolumn{1}{c|}{0.2}            & \multicolumn{1}{c|}{0.2}            & 0.05              \\
Batch size             & \multicolumn{1}{c|}{512}            & \multicolumn{1}{c|}{256}            & 256               \\ 
Learning rate schedule & \multicolumn{3}{c}{Cosine decay}      
\end{tabular}}
\label{image set}
\end{table}

\begin{table}[H]
\caption{\textbf{Training Settings in LanguageCore Model}}
\centering
\vspace{2mm}
\resizebox{0.48\textwidth}{!}{
\begin{tabular}{c|ccc}
Config                 & \multicolumn{1}{c|}{\textbf{Audio}} & \multicolumn{1}{c|}{\textbf{Depth}} & \textbf{Infrared} \\ \hline
Encoder                & \multicolumn{3}{c}{ViT-Huge}                                                                  \\
Number of Heads        & \multicolumn{1}{c|}{12}             & \multicolumn{1}{c|}{8}              & 12                \\
Optimizer              & \multicolumn{3}{c}{AdamW}                                                                     \\
Optimizer Momentum     & \multicolumn{3}{c}{$\beta_1 = 0.9, \beta_2 = 0.95$}                                           \\
Epochs                 & \multicolumn{1}{c|}{8}              & \multicolumn{1}{c|}{4}              & 4                 \\
Learning rate          & \multicolumn{1}{c|}{1e-4}           & \multicolumn{1}{c|}{5e-4}           & 1e-4              \\
Temperature            & \multicolumn{1}{c|}{0.05}           & \multicolumn{1}{c|}{0.2}            & 0.2               \\
Weight decay           & \multicolumn{1}{c|}{0.2}            & \multicolumn{1}{c|}{0.1}            & 0.05              \\
Batch size             & \multicolumn{1}{c|}{512}            & \multicolumn{1}{c|}{256}            & 256               \\
Learning rate schedule & \multicolumn{3}{c}{Cosine decay}   
\end{tabular}}
\label{text set}
\end{table}

\begin{figure*}[h]
    \centering
    \includegraphics[width=0.95\linewidth]{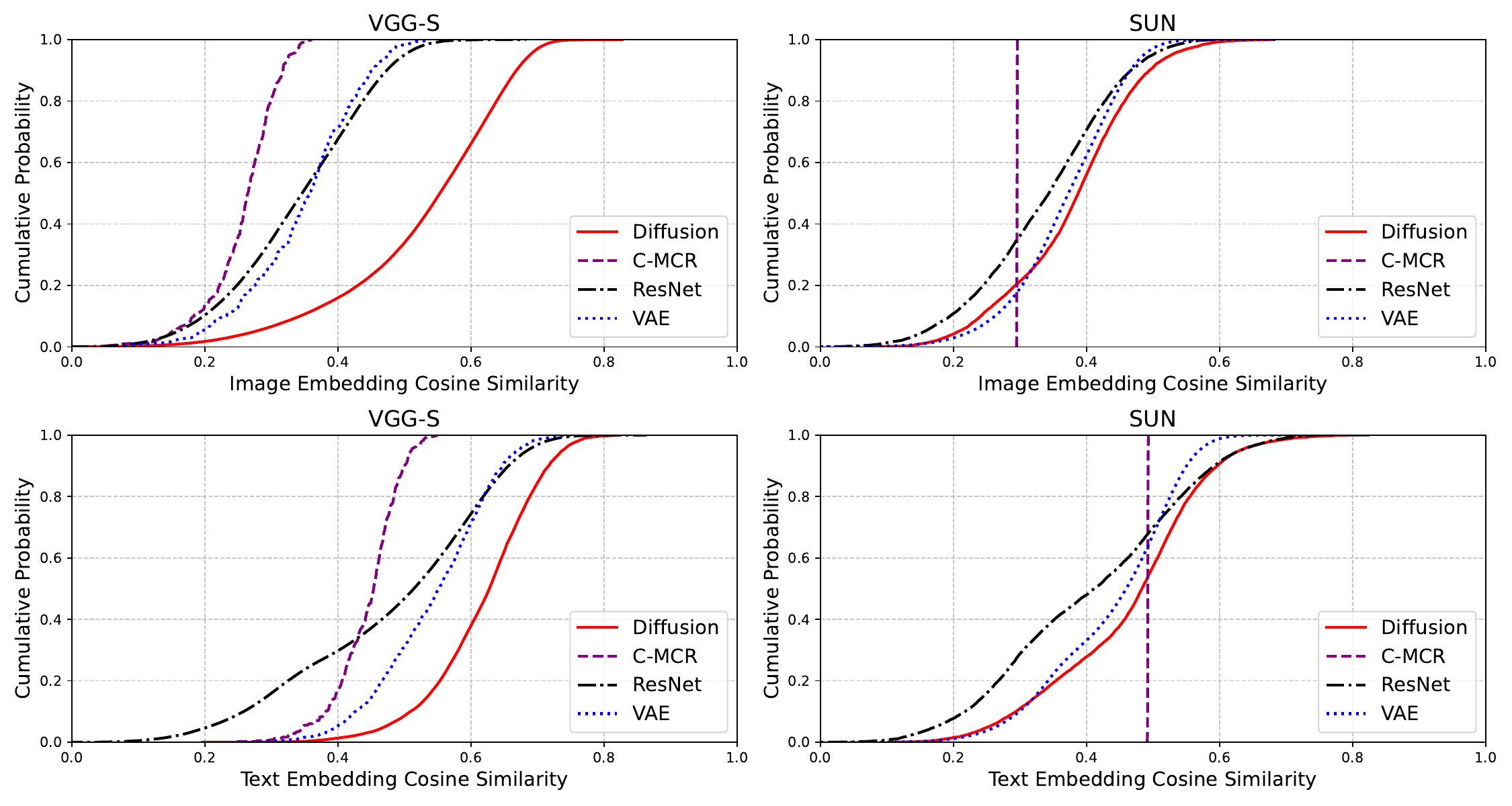}
    \caption{\textbf{CDF Curves of Similarity on VGG-S and SUN:}
The top and bottom rows respectively illustrate the cumulative distribution functions (CDFs) of cosine similarity between generated text/image embeddings (produced by different methods in Step \ref{step2}) and the corresponding actual text/image embeddings. Results are presented with image and text as the core modalities.
    The solid, dashed, dash-dotted, and dotted lines correspond to the diffusion, C-MCR, ResNet, and VAE methods, respectively.}
    \label{fig:generative CDF}
\end{figure*}

\section{Ablation Details}
\label{Ablation Details}

This section provides implementation details for the alternative proxy synthesis methods used in \S\ref{The effect of generative networks}. 
All methods below are \emph{plug-in replacements} for the synthesis network $N_\Theta$ in Step~2: given an anchor embedding $c_i\in\mathcal{C}$, they predict a proxy embedding in the paired modality (e.g., $x_i^{a}$) and output a normalized proxy $\hat{x}_i^{a}=\text{normalize}(N_\Theta(c_i))$. 
To ensure a controlled comparison, we train all predictors on the same paired embedding set (the training split of ImageNet-1K-VL-Enriched) with the same regression objective in Eq.~\ref{eq:genloss}, and we keep the optimizer settings consistent (AdamW, learning rate $1\times 10^{-4}$, batch size 128).

\subsection{ResNet Regressor}
We use the standard ResNet-50 implementation from PyTorch \citep{ResNet} as a deterministic regressor. 
We replace the final classification layer with a 1024-dimensional linear head to match the embedding dimension. 
The predicted embedding is $\ell_2$-normalized to the unit hypersphere before evaluation. 
We optimize the proxy regression objective (Eq.~\ref{eq:genloss}) using AdamW (learning rate $1\times 10^{-4}$, batch size 128). 
Inputs are anchor embeddings $c_i$, and targets are paired modality embeddings $x_i^{a}$ extracted from the corresponding frozen encoders.

\subsection{C-MCR Memory Retrieval}
We implement C-MCR \citep{C-MCR} as a non-parametric proxy synthesis baseline based on cross-modal memory retrieval. 
We construct a paired memory $\{(m_k^{\text{img}}, m_k^{\text{text}})\}_{k=1}^{N}$ from ImageNet-1K-VL-Enriched embeddings. 
Given a query embedding $q^{\text{text}}$ (or $q^{\text{img}}$), C-MCR synthesizes the cross-modal proxy via a softmax-weighted sum:
\begin{align}
\hat{x}^{\text{img}}(q^{\text{text}}) &= 
\text{normalize}\!\left(\sum_{k=1}^{N} \alpha_k(q^{\text{text}})\, m_k^{\text{img}}\right), \\
\hat{x}^{\text{text}}(q^{\text{img}}) &= 
\text{normalize}\!\left(\sum_{k=1}^{N} \beta_k(q^{\text{img}})\, m_k^{\text{text}}\right),
\end{align}
where
\begin{align}
\alpha_k(q^{\text{text}}) &= 
\frac{\exp(\text{sim}(q^{\text{text}}, m_k^{\text{text}})/\tau)}
{\sum_{\ell=1}^{N}\exp(\text{sim}(q^{\text{text}}, m_{\ell}^{\text{text}})/\tau)}, \\
\beta_k(q^{\text{img}}) &= 
\frac{\exp(\text{sim}(q^{\text{img}}, m_k^{\text{img}})/\tau)}
{\sum_{\ell=1}^{N}\exp(\text{sim}(q^{\text{img}}, m_{\ell}^{\text{img}})/\tau)}.
\end{align}
In our CDF analysis (Fig.~\ref{fig:generative CDF}), we observe that on SUN the retrieved proxies can collapse toward near-constant embeddings. 
This is consistent with a strong domain gap between SUN (scene-centric) and the ImageNet-1K-VL-Enriched memory, which makes similarity scores less discriminative and yields near-uniform softmax weights, thus reducing the diversity of synthesized proxies. 
This behavior highlights C-MCR's reliance on the memory distribution.

\subsection{VAE Regressor}
We implement a Variational Autoencoder (VAE) \citep{VAE} as a deterministic encoder--decoder baseline for proxy synthesis. 
Both encoder and decoder are convolutional networks with a symmetric 10-block design; channel widths follow [32, 32, 64, 64, 128, 128, 256, 256, 512, 512], with kernel size 3, stride 2, and padding 1. 
The decoder outputs a 1024-dimensional embedding that is $\ell_2$-normalized to the unit hypersphere. 
We train the VAE to regress from anchor embeddings $c_i$ to target embeddings $x_i^{a}$ using the proxy regression objective in Eq.~\ref{eq:genloss} (AdamW, learning rate $1\times 10^{-4}$, batch size 128) on the ImageNet-1K-VL-Enriched training split.

\subsection{CDF Analysis of Proxy Quality}
\label{cdfanalysis}
Figure~\ref{fig:generative CDF} reports the cumulative distribution function (CDF) of cosine similarity between synthesized proxies and their corresponding real embeddings, evaluated on VGG-S \citep{VGG} and SUN \citep{SUN-D}. 
A curve closer to 1 indicates higher proxy fidelity (i.e., synthesized embeddings better match real-data embeddings). 
As shown in Fig.~\ref{fig:generative CDF}, diffusion-based synthesis achieves consistently higher similarity across the range, indicating more faithful proxy embeddings. 
In contrast, the C-MCR baseline exhibits a noticeable degradation on SUN, consistent with the proxy collapse behavior discussed above.

\section{Downstream Datasets}
\label{Downstream Datasets}
\paragraph{AudioSet} \citep{AudioSet} is a dataset comprising 10-second YouTube videos annotated into 527 classes. It includes a balanced subset with approximately 20,000 videos, a test subset with 18,000 videos, and an unbalanced training subset with around 2 million videos.
For training with image and text as the core modalities, the balanced subset of 16,000 videos is utilized for audio-video and audio-text alignment, respectively. For zero-shot evaluation, as presented in Table \ref{tab:zero-shot x-language}, the test set is employed, with logits computed for each class using textual class names. A total of 16,000 data pairs are used for training.
During text-core modality training and zero-shot evaluation, prompt templates for class names, as detailed in Appendix \ref{Prompt Templates}, are applied. The performance metric reported is the mean Average Precision (mAP).

\paragraph{AudioCaps}\citep{AudioCaps}
is a dataset of audio-visual clips from YouTube with textual descriptions. 
It consists of clips from the AudioSet dataset. 
Following ImageBind, we used the splitting method provided in previous work \cite{oncescu2021audio} to remove clips that overlap with the VGGSound dataset. 
We obtain 48,198 training segments, 418 validation segments, and 796 test segments. 
Only the test set is used for zero-shot evaluation of our model. 
The task is text-to-audio retrieval and is evaluated using recall@K.

\paragraph{ESC-50} \citep{ESC} serves as a benchmark for evaluating the zero-shot capabilities of the learned representations. The dataset focuses on the task of \textsl{Environmental Sound Categorization} (ESC) and comprises 2,000 five-second audio clips distributed across 50 categories. In this study, we perform zero-shot predictions for evaluation purposes. The performance metric employed is the top-1 classification accuracy.

\paragraph{VGG-S}\citep{VGG} comprises approximately 200,000 ten-second video clips annotated with 309 sound categories, covering human actions, sound-producing objects, and human-object interactions. Zero-shot classification and RGB-to-Audio retrieval tasks were conducted using only the audio data from the test set. The evaluation metrics include top-1 accuracy for zero-shot classification and Recall@K for RGB-to-Audio retrieval.

\paragraph{AVE}\citep{AVE} contains 4,143 YouTube videos across 28 event categories and videos in the AVE dataset that are temporally labeled with audiovisual event boundaries.
Evaluations were performed using top-1 accuracy for zero-shot classification and Recall@K for RGB-to-Audio retrieval with the highest accuracy.

\paragraph{Clotho}\citep{clotho}
is an audio dataset with textual descriptions from the \textsl{Freesound} platform. 
It consists of a development set and a test set containing 2893 audio clips and 1045 audio clips respectively, each associated with 5 descriptions. 
We consider the text-to-audio retrieval task and treat each of the 5 associated descriptions as a separate test query, which is then retrieved from the set of audio clips. 
The metric used is recall@K, i.e., a given test query is assumed to be solved correctly if the base fact audio is retrieved in the first K audio clips retrieved.

\paragraph{SUN}\citep{SUN-D} contains about 10,000 RGB-D images.
We follow ImageBind to post process the depth maps in three steps: 1) in-filled depth values, 2) convert them to disparity for scale normalization and 3) limited the minimum and maximum depth to 0.01 and 10 meters respectively.
Training split (about 5,000 data pairs) is used for training models.
Specifically, for text core modality model training, we use prompt templates for the class names as described later in Appendix.\ref{Prompt Templates}.

\paragraph{NYU-D}\citep{NYU} is evaluated using 80\% of the dataset samples. During preprocessing, the minimum and maximum depth values of the depth images were constrained to 0.01 and 10 meters, respectively. Following the approach in ImageBind, we performed a classification and reorganization process that resulted in 10 scene categories. Zero-shot evaluation and RGB-to-Depth retrieval tasks were conducted, with top-1 accuracy and Recall@1 used as the evaluation metrics. For the RGB-to-Depth retrieval task, we utilized prompt templates, as detailed in Appendix \ref{Prompt Templates}.

\paragraph{LLVIP}\citep{LLVIP} is an infrared spectral pedestrian object detection dataset. Consistent with the ImageBind methodology, we extracted all instances of "people" from the images and categorized all other objects as background elements. This preprocessing resulted in a dataset comprising 7622 samples labeled as “background” and 7954 samples labeled as “people” which were subsequently utilized for binary classification tasks. Approximately 5000 infrared-RGB pairs were used for training. Additionally, prompt templates, as detailed in Appendix \ref{Prompt Templates}, were applied in the zero-shot classification task. Since LLVIP is primarily designed for detection tasks and RGB images lack direct text annotations, we employed GPT-4 to generate textual descriptions for each RGB image during the training process for the text core modality.

\paragraph{ImageNet-1K}\cite{imagenet} consists of 1,000 object classes and includes 1.28 million images for training, 5,000 images for validation, and 100,000 images for testing. Building upon this dataset, the ImageNet-1K-VL-Enriched\footnote{The dataset is available at https://huggingface.co/datasets/visual-layer/imagenet-1k-vl-enriched} dataset extends ImageNet-1K by incorporating additional features, including image captions, bounding boxes, and corrected label annotations. For training the diffusion model, caption-image pairs from the training split of the ImageNet-1K-VL-Enriched dataset are utilized.

\section{Additional Theoretical Analysis: Proof of Theorem 1}
\begin{proof}
First, we apply the chain rule to get the gradient of $\mathcal{L}^{\text{osr}}$ with respect to the parameter $\Theta$:
\begin{equation}
\label{ChainRuleGrad1}
\frac{\partial \mathcal{L}^{\text{osr}}}{\partial \Theta} = \frac{\partial x^b_i}{\partial \Theta} \frac{\partial T_{\bar{c}_i}(x^b_i)}{\partial x^b_i} \frac{\partial \mathcal{L}^{\text{osr}}}{\partial T_{\bar{c}_i}(x^b_i)}.
\end{equation}
Similarly, using the chain rule to calculate the gradient of $\mathcal{L}_{align}$ with respect to $\Theta$, we get $\frac{\partial \mathcal{L}^\text{align}}{\partial \Theta} =  \frac{\partial x^b_i}{\partial \Theta}  \frac{\partial \mathcal{L}^{\text{osr}}}{\partial x^b_i}$.
Substituting $\frac{\partial \mathcal{L}^{\text{osr}}}{\partial x^b_i}$ with $\bar{c}_i$ according to Eq.\ref{gradci}, we get
\begin{equation}
\label{ChainRuleGrad2}
\frac{\partial \mathcal{L}^{\text{align}}}{\partial \Theta} = - \frac{\partial x^b_i}{\partial \Theta}\bar{c}_i.
\end{equation}

\label{p1}
Take the inner product $\frac{\partial \mathcal{L}^\text{align}}{\partial \Theta} + \lambda \frac{\partial \mathcal{L}^{\text{osr}}}{\partial \Theta}$ and $\frac{\partial \mathcal{L}^\text{align}}{\partial \Theta}$ to get Eq.\ref{graddot}.
\begin{equation}
\label{graddot}
\begin{split}
&(\frac{\partial \mathcal{L}^\text{align}}{\partial \Theta} + \lambda \frac{\partial \mathcal{L}^{\text{osr}}}{\partial \Theta})^T \frac{\partial \mathcal{L}^\text{align}}{\partial \Theta} = \bar{c}_i^T \frac{\partial x^b_i}{\partial \Theta}^T\frac{\partial x^b_i}{\partial \Theta}\bar{c}_i \\
&+\lambda \frac{\partial \mathcal{L}^{\text{osr}}}{\partial T_{\bar{c}_i}(x^b_i)}^T\frac{\partial T_{\bar{c}_i}(x^b_i)}{\partial x^b_i}^T\frac{\partial x^b_i}{\partial \Theta}^T\frac{\partial x^b_i}{\partial \Theta}\bar{c}_i,
\end{split}
\end{equation}
Notably, we expand $\frac{\partial T_{\bar{c}_i}(x^b_i)}{\partial x^b_i}$ as in Eq.\ref{dot0}, and find that $\bar{c}_i^T\frac{\partial T_{\bar{c}_i}(x^b_i)}{\partial x^b_i}=0$ due to $\bar{c}_i^T\left(I-\frac{\bar{c}_i\bar{c}_i^T}{\|\bar{c}_i\|^2}\right)=0$.
\begin{equation}
\label{dot0}
\begin{split}
\frac{\partial T_{\bar{c}_i}(x^b_i)}{\partial x^b_i} &= \frac{\partial\left(I-\frac{\bar{c}_i\bar{c}_i^T}{\|\bar{c}_i\|^2}\right)x^b_i}{\partial x^b_i}\frac{\partial T_{\bar{c}_i}(x^b_i)}{\partial \left(I-\frac{\bar{c}_i\bar{c}_i^T}{\|\bar{c}_i\|^2}\right)x_i^b} \\
&= \left(I-\frac{\bar{c}_i\bar{c}_i^T}{\|\bar{c}_i\|^2}\right)\frac{\partial T_{\bar{c}_i}(x^b_i)}{\partial \left(I-\frac{\bar{c}_i\bar{c}_i^T}{\|\bar{c}_i\|^2}\right)x_i^b}.
\end{split}
\end{equation}
Next, we introduce Lemma.\ref{lemma1}
\begin{lemma}
\label{lemma1} 
We have $x^TA^TAy \geq -(\kappa(A^TA)-1)^\frac{1}{2}\|Ay\|^2\frac{\|x\|}{\|y\|}$, if $y^Tx = 0$ and $\kappa(\cdot)$ is condistion number.
\end{lemma}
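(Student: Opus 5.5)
The plan is to set $M = A^TA$, which is symmetric positive semidefinite with extreme eigenvalues $\lambda_{\max}\ge\lambda_{\min}\ge 0$. If $\lambda_{\min}=0$, then $\kappa(M)=\infty$ and the right-hand side is $-\infty$ (for $Ay\neq 0$; the case $Ay=0$ gives $x^TMy=0$ directly), so the bound is vacuous. I therefore assume $\lambda_{\min}>0$, so that $\kappa(M)=\lambda_{\max}/\lambda_{\min}$, and $y\neq 0$, as the statement implicitly requires. The strategy is to use orthogonality to shift $M$ by a multiple of the identity, and then apply Cauchy--Schwarz to the shifted, still positive semidefinite, matrix.

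\emph{Step 1 (shift).} Since $y^Tx=0$, we have $x^TMy = x^T(M-\lambda_{\min}I)y$. Write $B=M-\lambda_{\min}I$. It is symmetric with $0\preceq B\preceq (\lambda_{\max}-\lambda_{\min})I$. Cauchy--Schwarz then gives
\begin{equation*}
x^TMy \;\ge\; -\|x\|\,\|By\|.
\end{equation*}

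\emph{Step 2 (bound $\|By\|$).} Because $0\preceq B\preceq \|B\|I$, we have $B^2\preceq \|B\|\,B$. Hence
\begin{equation*}
\|By\|^2 = y^TB^2y \le (\lambda_{\max}-\lambda_{\min})\,y^TBy \le (\lambda_{\max}-\lambda_{\min})\,y^TMy = (\lambda_{\max}-\lambda_{\min})\|Ay\|^2,
\end{equation*}
where the last inequality drops the nonpositive term $-\lambda_{\min}\|y\|^2$.

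\emph{Step 3 (convert to the stated form).} It remains to show
\begin{equation*}
\sqrt{\lambda_{\max}-\lambda_{\min}}\,\|Ay\| \le (\kappa(M)-1)^{1/2}\,\frac{\|Ay\|^2}{\|y\|} = \sqrt{\frac{\lambda_{\max}-\lambda_{\min}}{\lambda_{\min}}}\,\frac{\|Ay\|^2}{\|y\|}.
\end{equation*}
This is equivalent to $\|Ay\|\ge \sqrt{\lambda_{\min}}\,\|y\|$, which holds by the Rayleigh quotient bound $y^TMy\ge\lambda_{\min}\|y\|^2$. Chaining Steps 1--3 gives the lemma.

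The only real obstacle is Step 2: the right-hand side is quadratic in $\|Ay\|$, so a crude operator-norm bound on $\|By\|$ does not suffice. The point is to bound $\|By\|^2$ by $y^TBy$ via $B^2\preceq\|B\|B$, which then relates to $\|Ay\|^2$. If that route failed, a fallback would be Wielandt's inequality for orthogonal vectors, which gives a sharper constant that would still need to be weakened to the stated one.
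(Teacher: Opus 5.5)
Your proof is correct, and it takes a more elementary route than the paper. The paper recasts the left-hand side as the problem $\min_x x^TA^TAy$ subject to $y^Tx=0$, $x^Tx=1$. It forms a Lagrangian with multipliers $\mu_1$ (orthogonality) and $\mu_2$ (norm) and maximizes the dual over both. This yields the exact value $-(\|A^TAy\|^2\|y\|^2-\|Ay\|^4)^{1/2}/\|y\|$, i.e.\ minus the norm of the component of $A^TAy$ orthogonal to $y$. It then bounds $\|A^TAy\|\le\|A\|\,\|Ay\|$ and $\|A\|^2\|y\|^2\le\kappa(A^TA)\|Ay\|^2$, and concludes by weak duality.

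Your shift by $\lambda_{\min}I$ is exactly the paper's dual bound with $\mu_2$ optimized and $\mu_1$ fixed at $\lambda_{\min}$, rather than at its optimum (the Rayleigh quotient $y^TA^TAy/\|y\|^2$). So you give up the exact minimum, but a single Cauchy--Schwarz step replaces the whole duality computation. Where the paper uses $\|A^TAy\|\le\|A\|\,\|Ay\|$, you use $B^2\preceq\|B\|B$. Both serve the same purpose: they express the bound through $\|Ay\|$, so that $\|Ay\|^2\ge\lambda_{\min}\|y\|^2$ can finish the argument.

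Your version gives a self-contained inequality chain with no multiplier or sign bookkeeping. It also handles the degenerate cases $\lambda_{\min}=0$ and $Ay=0$ explicitly. Its intermediate estimate $-\|x\|\sqrt{\lambda_{\max}-\lambda_{\min}}\,\|Ay\|$ is already stronger than the stated bound. The paper's version starts from the sharpest possible expression, which makes clear that all the slack enters only in the final norm estimates.

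Neither argument uses the assumption $\kappa(A^TA)<2$ that the paper mentions inside this proof. That assumption only matters later, when the lemma is applied in the theorem.
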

\begin{proof}
    Rather than proving this inequality directly, we turn to the follow the lower bound for the optimization problem.\ref{opt1}.
    \begin{equation}
    \label{opt1}
    \begin{aligned}
        & \min_x \quad && x^TA^TAy, \\
        & \text{subject to} \quad 
            && y^Tx  = 0,\\
            &&& x^Tx = 1,
    \end{aligned}
\end{equation}
To solve optimization problem.\ref{opt1}, we write the Lagrangian function in Eq.\ref{Lagrangian},
\begin{equation}
    \label{Lagrangian}
    L(x,\mu_1, \mu_2) = x^TA^TAy + \mu_1 x^Ty + \mu_2(x^Tx-1).
\end{equation}
We can easily get the Lagrangian dual function Eq.\ref{Langrangian dual} form Eq.\ref{Lagrangian},
\begin{equation}
    \label{Langrangian dual}
    L(\mu_1,\mu_2) = \inf_x L(x,\mu_1,\mu_2) = -\frac{\|(A^TA-\mu_1I)y\|^2}{4\mu_2} -\mu_2,
\end{equation}
Thus, we obtain the unconstrained Lagrangian dual problem.\ref{dual opt}
\begin{equation}
\label{dual opt}
    \max_{\mu_1,\mu_2} L(\mu_1,\mu_2).
\end{equation}
Besides, we have
\begin{equation}
    \label{dual opt1}
    \begin{split}
    \max_{\mu_2} L(\mu_1,\mu_2) &= \|(A^TA-\mu_1I)y\| \\
    &= \left(\mu_1^2y^Ty-2\mu_1y^TA^TAy+y^TA^TAA^TAy\right)^\frac{1}{2}.
    \end{split}
\end{equation}
Thus, we get 
\begin{equation}
    \label{dual opt2}
    \begin{split}
    \max_{\mu_1,\mu_2} L(\mu_1,\mu_2) &= \max_{\mu_1}\max_{\mu_2} L(\mu_1,\mu_2) \\
    &= -\frac{\left(\|A^TAy\|^2\|y\|^2-\|Ay\|^4\right)^\frac{1}{2}}{\|y\|}.
    \end{split}
\end{equation}
Since the maximum eigenvalue of $A^TA$ divided by the minimum eigenvalue of $A^TA$ is less than $2$, we have
\begin{equation}
\label{22}
\begin{split}
    \|A^TAy\|^2\|y\|^2 &\leq \|A\|^2\|Ay\|^2\|y\|^2 \\
    &= \|Ay\|^4\frac{\|A\|^2\|y\|^2}{\|Ay\|^2} \leq \kappa(A^TA)\|Ay\|^4.
\end{split}
\end{equation}
After that, according to (\ref{dual opt2}) and (\ref{22}), we have
\begin{equation}
    \max_{\mu_1,\mu_2}L(\mu_1,\mu_2) \geq -(\kappa(A^TA)-1)^\frac{1}{2}\frac{\|Ay\|^2}{\|y\|}.
\end{equation}
According to the duality principle \citep{boyd2004convex}, we have $\frac{x^T}{\|x\|}A^TAy\geq -(\kappa(A^TA)-1)^\frac{1}{2}\frac{\|Ay\|^2}{\|y\|}$ which completes the proof.
\end{proof}

According to \cite{JMLR:v22:20-410,Arora2018ACA,pmlr-v97-du19c,pmlr-v97-ghorbani19b}, $\kappa\left(\frac{\partial x^b_i}{\partial \Theta}^T\frac{\partial x^b_i}{\partial \Theta}\right)$ will converge and less than 2.
Consider $\frac{\partial T_{\bar{c}_i}(x^b_i)}{\partial x^b_i} \frac{\partial \mathcal{L}^\text{osr}}{\partial T_{\bar{c}_i}(x^b_i)},\bar{c}_i$ and $\frac{\partial x^b_i}{\partial \Theta}$ as $x$, $y$, and $A$ in Lemma.\ref{lemma1}, respectively. According to Lemma.\ref{lemma1}, we can get inequality.\ref{ieq1}
\begin{equation}
\label{ieq1}
\begin{split}
&(\frac{\partial \mathcal{L}^\text{align}}{\partial \Theta} + \lambda \frac{\partial \mathcal{L}^\text{osr}}{\partial \Theta})^T \frac{\partial \mathcal{L}^\text{align}}{\partial \Theta} \geq \\
&\|\frac{\partial x^b_i}{\partial \Theta}\bar{c}_i\|^2(1-\lambda\frac{\|\frac{\partial T_{\bar{c}_i}(x^b_i)}{\partial x^b_i} \frac{\partial \mathcal{L}^\text{osr}}{\partial T_{\bar{c}_i}(x^b_i)}\|}{\|\bar{c}_i\|}).
\end{split}
\end{equation}
Moreover, expanding $\frac{\partial T_{\bar{c}_i}(x^b_i)}{\partial (I-\frac{\bar{c}_i\bar{c}_i^T}{\|\bar{c}_i\|^2})x^b_i}$ in Eq.\ref{dot0} we have 
\begin{equation}
\begin{split}
    &\frac{\partial T_{\bar{c}_i}(x^b_i)}{\partial (I-\frac{\bar{c}_i\bar{c}_i^T}{\|\bar{c}_i\|^2})x^b_i} = \\
    &I - \frac{(I-\frac{\bar{c}_i\bar{c}_i^T}{\|\bar{c}_i\|^2})x^b_i\left((I-\frac{\bar{c}_i\bar{c}_i^T}{\|\bar{c}_i\|^2})x^b_i\right)^T}{\|(I-\frac{\bar{c}_i\bar{c}_i^T}{\|\bar{c}_i\|^2})x^b_i\|^2}.
\end{split}
\end{equation}
Furthermore, we find $(I-\frac{\bar{c}_i\bar{c}_i^T}{\|\bar{c}_i\|^2})$ and $\frac{\partial T_{\bar{c}_i}(x^b_i)}{\partial (I-\frac{\bar{c}_i\bar{c}_i^T}{\|\bar{c}_i\|^2})x^b_i}$ are both projection matrix \citep{horn2012matrix}, and both spectral radius are less than $1$.
Thus, we have
\begin{equation}
\label{spect}
    \|\frac{\partial T_{\bar{c}_i}(x^b_i)}{\partial x^b_i}\frac{\partial \mathcal{L}^\text{osr}}{\partial T_{\bar{c}_i}(x^b_i)}\|\leq\|\frac{\partial \mathcal{L}^\text{osr}}{\partial T_{\bar{c}_i}(x^b_i)}\|,
\end{equation}
due to the spectral radius of $\frac{\partial T_{\bar{c}_i}(x^b_i)}{\partial x^b_i}$ is less than $1$.
Thus, according to inequality .\ref{ieq1} and \ref{spect}, we have
\begin{equation}
    (\frac{\partial \mathcal{L}^\text{align}}{\partial \Theta} + \lambda \frac{\partial \mathcal{L}^\text{osr}}{\partial \Theta})^T \frac{\partial \mathcal{L}^\text{align}}{\partial \Theta} \geq \|\frac{\partial x^b_i}{\partial \Theta}\bar{c}_i\|^2(1-\lambda\frac{\|\frac{\partial \mathcal{L}^\text{osr}}{\partial T_{\bar{c}_i}(x^b_i)}\|}{\|\bar{c}_i\|}),
\end{equation}
which means $(\frac{\partial \mathcal{L}^\text{align}}{\partial \Theta} + \lambda \frac{\partial \mathcal{L}^\text{osr}}{\partial \Theta})^T \frac{\partial \mathcal{L}^\text{align}}{\partial \Theta} \geq  0$ when $\lambda \leq \frac{\|\bar{c}_i\|}{\|\frac{\partial \mathcal{L}^\text{osr}}{\partial T_{\bar{c}_i}(x^b_i)}\|}$ and complete the proof.
\end{proof}

\subsection{InfoNCE vs. Orthogonal-Subspace Regularizer}
\label{Discussion} 
In this section, we provide a preliminary analysis of why directly employing infoNCE to align with synthetic embeddings leads to a decline in the alignment capability of the core modality.
For clarity, we focus on the scenario where the similarity function is defined as cosine similarity, and the temperature parameter $\tau$ is set to 1.
As discussed in Sec. \ref{sec:orthoreg}, $\mathcal{L}^\text{osr}$ facilitates the alignment between the two modalities, whereas $\mathcal{L}^\text{uniform}$ primarily serves as a regularization term.
Thus, the align parts of the two infoNCEs are added together and we get
\begin{equation}
\begin{split}
    \mathcal{L}^\text{align}(\hat{x}^a_i,x^b_i)+{\lambda}\mathcal{L}^\text{align}(c_i,x^b_i) &=-(\hat{x}^a_i)^Tx^b_i- {\lambda}c_i^Tx^b_i \\
    &= -(\hat{x}^a_i+ {\lambda}c_i)^Tx^b_i,
\end{split}
\end{equation}
This implies that when $x^b_i$ is not aligned with either $c_i$ or $\hat{x}^a_i$, but instead with $\text{normalize}(\hat{x}^a_i + {\lambda}c_i)$, the alignment with the core modality is compromised.
In contrast, replacing infoNCE with the OSR modifies the alignment-related loss function to:
\begin{equation}
\label{al_ap}
    \mathcal{L}^\text{align} = -c_i^Tx^b_i-{\lambda}(\hat{x}^a_i)^T\text{normalize}((I-\frac{c_ic_i^T}{\|c_i\|^2})x^b_i).
\end{equation}
It is important to note that, for simplicity of notation, we continue to use $\mathcal{L}^\text{align}$ in Eq.\ref{al_ap}, despite a minor discrepancy between Eq.\ref{al_ap} and Eq.\ref{eqn:align part}.
Letting $\mathcal{L}^\text{align}$ take the derivative of $x^b_i$, we get
\begin{equation}
\label{ap_2}
\begin{split}
    \frac{\partial \mathcal{L}^\text{align}}{\partial x^b_i} = -c_i - {\lambda}(I-\frac{x^b_i(x^b_i)^T}{\|x^b_i\|^2})(I-\frac{c_ic_i^T}{\|c_i\|^2}) \\ \cdot (I - \frac{(I-\frac{\bar{c}_i\bar{c}_i^T}{\|\bar{c}_i\|^2})x^b_i\left((I-\frac{\bar{c}_i\bar{c}_i^T}{\|\bar{c}_i\|^2})x^b_i\right)^T}{\|(I-\frac{\bar{c}_i\bar{c}_i^T}{\|\bar{c}_i\|^2})x^b_i\|^2})\hat{x}^a_i.
\end{split}
\end{equation}
Since $x^b_i$ is confined to the hypersphere, $(I-\frac{x^b_i(x^b_i)^T}{\|x^b_i\|^2})$ appears in Eq.\ref{ap_2}.
Taking an inner product of $c_i$ and $\frac{\partial \mathcal{L}^\text{align}}{\partial x^b_i}$, we have
\begin{equation}
\begin{split}
    &c_i^T\frac{\partial \mathcal{L}^\text{align}}{\partial x^b_i} = -c_i^Tc_i - {\lambda}c_i^T(I-\frac{x^b_i(x^b_i)^T}{\|x^b_i\|^2})(I-\frac{c_ic_i^T}{\|c_i\|^2}) \\
    &\cdot (I - \frac{(I-\frac{\bar{c}_i\bar{c}_i^T}{\|\bar{c}_i\|^2})x^b_i\left((I-\frac{\bar{c}_i\bar{c}_i^T}{\|\bar{c}_i\|^2})x^b_i\right)^T}{\|(I-\frac{\bar{c}_i\bar{c}_i^T}{\|\bar{c}_i\|^2})x^b_i\|^2})\hat{x}^a_i.
\end{split}
\end{equation}
It is easy to find that 
\begin{equation}
    c_i^T(I-\frac{x^b_i(x^b_i)^T}{\|x^b_i\|^2}) = c_i^T-\frac{c_i^Tx^b_i}{\|x^b_i\|^2}(x^b_i)^T,
\end{equation}
where 
\begin{equation}
    c_i^T(I-\frac{c_ic_i^T}{\|c_i\|^2}) = 0,
\end{equation}
and 
\begin{equation}
\begin{split}
    &\frac{c_i^Tx^b_i}{\|x^b_i\|^2}\left((x^b_i)^T(I-\frac{c_ic_i^T}{\|c_i\|^2})\right) \\
    &\cdot (I - \frac{(I-\frac{\bar{c}_i\bar{c}_i^T}{\|\bar{c}_i\|^2})x^b_i\left((I-\frac{\bar{c}_i\bar{c}_i^T}{\|\bar{c}_i\|^2})x^b_i\right)^T}{\|(I-\frac{\bar{c}_i\bar{c}_i^T}{\|\bar{c}_i\|^2})x^b_i\|^2}) = 0.
\end{split}
\end{equation}
Thus, we have
\begin{equation}
    c_i^T\frac{\partial \mathcal{L}^\text{align}}{\partial x^b_i} = -c_i^Tc_i \leq 0,
\end{equation}
which means the similarity between $x^b_i$ and $c_i$ is on increasing.

\end{document}